\documentclass{article}
\usepackage{iclr2025_conference,times}

\usepackage{amsmath,amsfonts,bm}
\usepackage{multicol}

\def\eqref#1{equation~\ref{#1}}

\def\1{\bm{1}}

\DeclareMathAlphabet{\mathsfit}{\encodingdefault}{\sfdefault}{m}{sl}
\SetMathAlphabet{\mathsfit}{bold}{\encodingdefault}{\sfdefault}{bx}{n}

\usepackage{hyperref}
\usepackage{url}
\usepackage[T1]{fontenc}
\usepackage{microtype}
\usepackage{graphicx}
\usepackage{subcaption}
\usepackage{booktabs} 
\usepackage{tcolorbox}
\usepackage{listings}
\usepackage{tabularx}
\usepackage{algorithmic}
\usepackage[dvipsnames]{xcolor}

\usepackage{soul}

\usepackage{mathtools}
\usepackage{amsthm}

\usepackage{amssymb}
\usepackage{microtype}
\usepackage{hyperref}
\usepackage{url}
\usepackage{booktabs}
\usepackage{amsmath}
\usepackage{enumitem}
\usepackage{graphicx}
\usepackage{lineno}
\usepackage{xspace}
\usepackage{algorithm}
\usepackage{array}
\usepackage{booktabs}
\usepackage{longtable}
\usepackage{wrapfig}
\usepackage{caption}
\usepackage{xcolor}
\usepackage{color,colortbl}
\usepackage[table]{xcolor}
\usepackage{arydshln}

\usepackage[capitalize,noabbrev]{cleveref}

\definecolor{darkblue}{rgb}{0, 0, 0.5}
\definecolor{searchblue}{RGB}{0, 150, 255}
\definecolor{infobrown}{RGB}{165, 137, 100}
\definecolor{answerpink}{RGB}{219, 112, 147}
\definecolor{thinkpurple}{RGB}{147, 112, 219} 
\definecolor{questionred}{RGB}{200, 50, 50}
\hypersetup{colorlinks=true, citecolor=darkblue, linkcolor=darkblue, urlcolor=darkblue}

\NewDocumentCommand{\hongru}
{ mO{} }{\textcolor{red}{\textsuperscript{\textit{Hongru}}\textsf{\textbf{\small[#1]}}}}

\theoremstyle{plain}
\newtheorem{theorem}{Theorem}[section]
\newtheorem{proposition}[theorem]{Proposition}

\theoremstyle{definition}
\newtheorem{definition}[theorem]{Definition}

\theoremstyle{remark}

\usepackage[textsize=tiny]{todonotes}

\title{Search-R2: Enhancing Search-Integrated Reasoning \\ via Actor-Refiner Collaboration}

\author{
Bowei He$^{\clubsuit\diamondsuit\heartsuit*}$, Minda Hu$^{\spadesuit\heartsuit*}$, Zenan Xu$^{\heartsuit*}$, Hongru Wang$^{\S\dag}$, Licheng Zong$^{\spadesuit}$, Yankai Chen$^{\clubsuit\diamondsuit}$ \\
Chen Ma$^{\flat}$, Xue Liu$^{\clubsuit\diamondsuit}$, Pluto Zhou$^{\heartsuit\dag}$, Irwin King$^{\spadesuit\dag}$\\
\vspace{2mm}
\textbf{$^\spadesuit$The Chinese University of Hong Kong} \quad 
\textbf{$^\heartsuit$LLM Department, Tencent} \\
\textbf{$^\clubsuit$Mohamed bin Zayed University of Artificial Intelligence} \\
\textbf{$^\diamondsuit$McGill University} \quad
\textbf{$^\flat$City University of Hong Kong} \\
\textbf{$^\S$The University of Edinburgh}
}

\begingroup
\renewcommand\thefootnote{*}
\footnotetext{\raggedright The first three authors have equal contributions.}
\renewcommand\thefootnote{\dag}
\footnotetext{\raggedright \mbox{Correspondence~to:~Hongru~Wang~\texttt{<hrwang@ed.ac.uk>},~Irwin~King~\texttt{<king@cse.cuhk.edu.hk>}, Pluto Zhou} \texttt{<plutozhou096@foxmail.com>}}
\endgroup

\begin{document}
\maketitle
\let\oldthefootnote\thefootnote
\let\thefootnote\oldthefootnote
\begin{abstract}
Search-integrated reasoning enables language agents to transcend static parametric knowledge by actively querying external sources. However, training these agents via reinforcement learning is hindered by the \textit{multi-scale credit assignment} problem: existing methods typically rely on sparse, trajectory-level rewards that fail to distinguish between high-quality reasoning and fortuitous guesses, leading to redundant or misleading search behaviors. To address this, we propose Search-R2, a novel Actor–Refiner collaboration framework that enhances reasoning through targeted intervention, with both components jointly optimized during training. Our approach decomposes the generation process into an Actor, which produces initial reasoning trajectories, and a Meta-Refiner, which selectively diagnoses and repairs flawed steps via a ``cut-and-regenerate'' mechanism. To provide fine-grained supervision, we introduce a hybrid reward design that couples outcome correctness with a dense process reward quantifying the information density of retrieved evidence. Theoretically, we formalize the Actor–Refiner interaction as a smoothed mixture policy, proving that selective correction yields strict performance gains over strong baselines. Extensive experiments across various general and multi-hop QA datasets demonstrate that Search-R2 consistently outperforms strong RAG and RL-based baselines across model scales, achieving superior reasoning accuracy with minimal overhead.
\end{abstract}

\section{Introduction}
\label{sec:introduction}
Large language models are rapidly evolving from static knowledge repositories into dynamic, search-integrated agents that interact with external environments~\citep{trivedi2023interleaving, li2025search}. By combining iterative reasoning with active retrieval, these agents tackle knowledge-intensive tasks such as open-domain and multi-hop question answering that were previously intractable due to limited parametric knowledge and hallucinations. Consequently, this field has turned to Reinforcement Learning (RL) to optimize these systems~\citep{jin2025search, chen2025learning}, grounding agent behavior in task-specific performance objectives rather than imitation of human demonstrations.

\begin{figure}[h]
\centering
\includegraphics[width=0.56\textwidth]{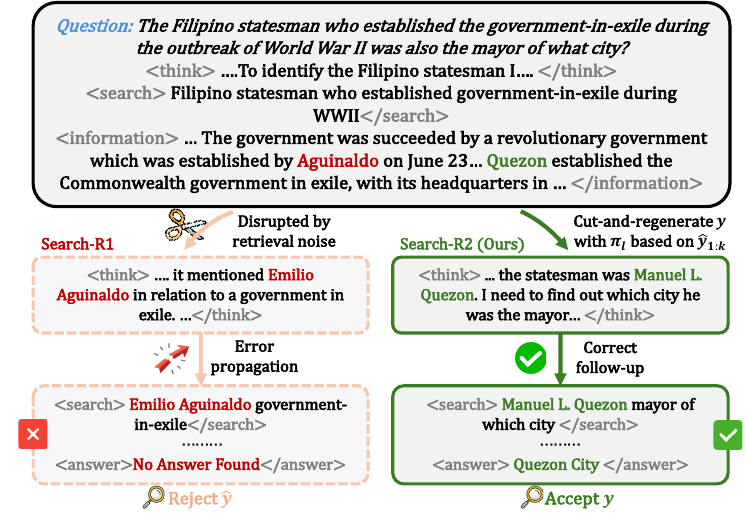}
\caption{\textbf{Demonstration of Search-R1 and Search-R2.}
While Search-R1 (Left) is disrupted by retrieval noise and falls into an error propagation loop, Search-R2 (Right) utilizes an Actor-Refiner collaboration. The Meta-Refiner identifies the deviation and applies a "cut-and-regenerate" mechanism to surgically repair the reasoning chain at the point of error, successfully redirecting focus from the incorrect entity (Aguinaldo) to the correct one (Quezon).}
\label{fig:sample}
\end{figure}


However, training search-integrated agents with RL faces a key challenge: \textit{multi-scale credit assignment}. In practice, agent behavior is a sequence of decisions, including query formulation, information filtering, and logical deduction, yet standard methods optimize policies with trajectory-level rewards such as final-answer correctness~\citep{jin2025search, wang2025actingreasoningmoreteaching}. Since this outcome-only signal provides no supervision over intermediate reasoning or the timing and necessity of retrieval, it induces credit misattribution across both retrieval and reasoning decisions~\citep{zhang2025process}. Consequently, efficient, logically coherent trajectories receive similar credit to trajectories that succeed only after redundant, costly, or poorly timed retrieval, reducing sample efficiency and yielding brittle reasoning chains.
This limitation highlights a critical gap in current methodologies: \textit{the inability to diagnose and repair error propagation}. As shown in Figure~\ref{fig:sample}, a single irrelevant search query early in a trajectory can misguide the entire subsequent reasoning chain. Existing rejection sampling techniques~\citep{ahn2024large} are inefficient here, as they discard the entire trajectory rather than addressing the specific root cause of the deviation. To build robust agents, we must move beyond outcome-based filtering toward a paradigm that enforces both \textit{global reasoning coherence} and \textit{local search quality}.


To this end, we propose Search-R2, a novel Actor–Refiner collaboration framework designed to enhance search-integrated reasoning through targeted intervention. Unlike standard generation approaches, Search-R2 decomposes the reasoning process into two distinct roles: an Actor that generates initial reasoning trajectories with tool calls, and a Meta-Refiner that identifies localized failures, such as uninformative retrieval or logical gaps, and performs a ``cut-and-regenerate'' operation.  
This mechanism preserves valid reasoning prefixes while surgically repairing flawed suffixes, significantly enhancing learning efficiency. The Actor and Meta-Refiner are jointly optimized during training, enabling mutual feedback between trajectory generation and selective refinement.
To further provide dense supervision, we introduce a hybrid reward that combines outcome correctness with a process reward  that quantifies the density of evidence information. We theoretically prove that our Actor–Refiner interaction, which is modeled as a smoothed mixture policy, strictly exceeds the performance of baselines like rejection sampling under satisfiable conditions. Experiments on seven benchmarks show  consistent gains of the proposed Search-R2 over strong RAG and RL-based baselines across model sizes ranging from 7B to 32B, with minimal overhead.



In summary, our contributions are as follows:

\textit{1.Problem Identification}: We formalize the multi-scale credit assignment problem in search-integrated reasoning, highlighting the inadequacy of trajectory-level rewards for optimizing intermediate search behaviors.

\textit{2.Framework}: We propose Search-R2, an Actor–Refiner framework that integrates step-level process rewards with a trajectory-level ``cut-and-regenerate'' refinement mechanism, and jointly optimizes both the Actor and the Refiner.

\textit{3. Theoretical Analysis}: We characterize the Meta-Refiner as a mixture policy and derive the theoretical conditions under which selective correction guarantees performance improvement over baseline sampling.

\textit{4. Empirical Success}: We demonstrate state-of-the-art performance on seven  across different-size models, showing that Search-R2 improves both the accuracy of final answers and the quality of the underlying search process.

\section{Related Works}
This section reviews prior work on search-integrated reasoning and multi-turn reinforcement learning. 

\subsection{Search-Integrated Reasoning}
Search-integrated language agents augment large language models with the ability to actively query external information sources during problem solving, enabling them to overcome the limitations of static parametric knowledge~\citep{jin2025search, chen2025learning}. Prior work has explored search-augmented reasoning for tasks such as multi-hop question answering~\citep{sun2025simpledeepsearcher, wu2025mmsearch}, deep research~\citep{team2025tongyi,hu-etal-2024-serts}, and web-based decision making~\citep{zhouwebarena,hu-etal-2025-webcot}, demonstrating that iterative search can substantially improve factual accuracy and coverage. More recent approaches integrate search into reinforcement learning frameworks~\citep{chen2025learning, qian2025scent, song2025r1}, allowing agents to learn when and how to issue search queries based on task-level feedback. However, existing methods typically optimize search behavior only through delayed, trajectory-level rewards, without explicitly assessing the quality of individual search decisions~\citep{wen2026smartsearch}. As a result, agents often issue redundant, mistimed, or weakly informative queries, especially in long-horizon interactions~\citep{gao2025beyond}, where suboptimal search decisions compound over time and degrade both task performance and learning efficiency.

\subsection{Credit Assignment in Multi-Turn RL}
Learning effective policies for multi-turn decision making remains a central challenge in reinforcement learning and agent research due to sparse rewards and difficult credit assignment~\citep{devidze2022exploration, wang2025practitioner}. This challenge is particularly pronounced in search-integrated agents, where intermediate decisions such as query formulation and timing are evaluated only through final task outcomes~\citep{zhang2025process}. Prior work has proposed dense reward shaping~\citep{zeng2025reinforcing, zhang2025rlvmr} and learned reward models~\citep{zou2025reasonflux}, including Large Language Models (LLM)-based judges~\citep{zha2025rl}, to provide richer feedback signals. While these techniques improve optimization stability in some settings, they are most commonly applied to evaluate final responses or aggregate trajectory quality, leaving the quality of intermediate decisions underspecified. Consequently, policy optimization often suffers from low sampling efficiency, as many rollouts contain low-quality intermediate actions that contribute little to learning. These limitations motivate approaches that provide fine-grained supervision over intermediate decisions while remaining compatible with multi-turn optimization.



\begin{figure*}
\centering
\includegraphics[width=0.95\textwidth]{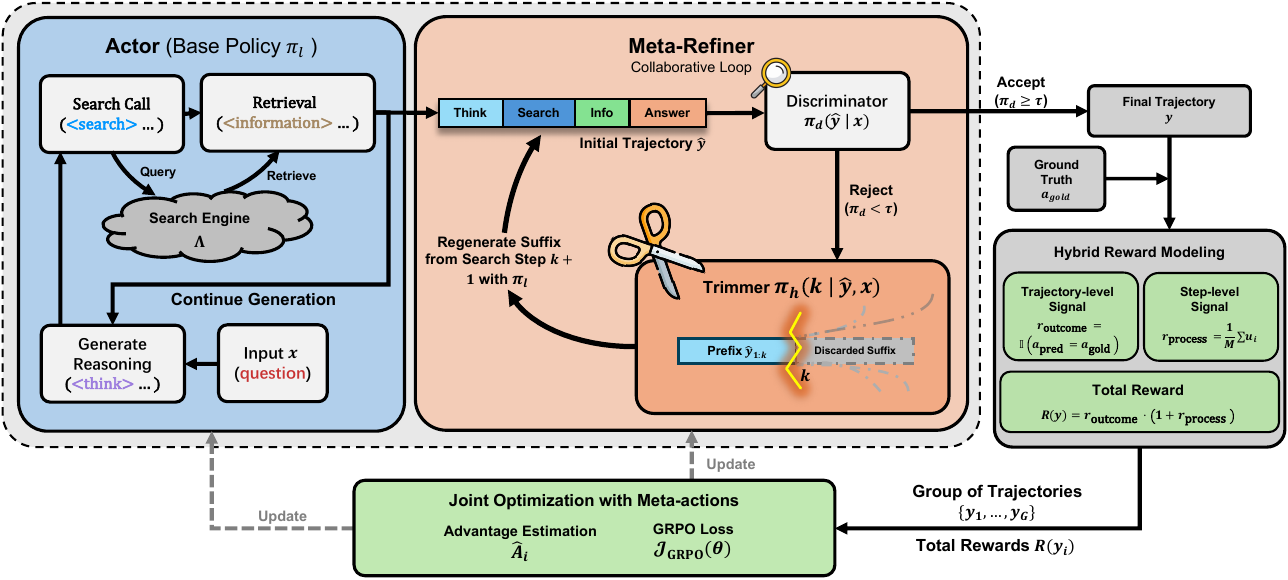}
\caption{Overview of the Search-R2 framework. The Actor generates initial reasoning trajectories with search queries. The Meta-Refiner employs a Discriminator to detect errors and a Trimmer to identify the exact step of failure. Upon rejection, the trajectory is truncated and regenerated from the error point. The system is jointly optimized via GRPO using a hybrid reward.
}
\label{fig:methodology}
\end{figure*}

\section{Methodology}
\label{sec:methodology}


We propose Search-R2, a novel Actor-Refiner collaboration framework designed to address the multi-scale credit assignment challenge. Rather than treating search-integrated reasoning as a monolithic generation task, our approach decouples the process into two distinct phases: an \textit{Actor} generating initial reasoning chains, and a \textit{Meta-Refiner} performing trajectory-level assessment and causal correction. This decomposition allows us to optimize both global reasoning coherence and local search quality simultaneously.

\subsection{The Search-Integrated Reasoning Actor}
The foundation of our system is an Actor policy, denoted as $\pi_l(\cdot|x)$, responsible for generating the initial reasoning trajectory $\hat{y}$. 
Given the search engine $\Lambda$, the $\pi_l$ is trained to invoke $\Lambda$ autonomously following a standard tool-use paradigm (Algorithm~\ref{alg:search_r1}), to enable dynamic information acquisition. The model generates a chain of thought and, when necessary, emits a query within \texttt{<search>...</search>} tags. The system halts generation, executes the query against $\Lambda$, appends the top-$k$ results within \texttt{<information>...</information>} tags, and resumes generation. This cycle repeats until the model outputs the final answer or reaches a step limit.

To initialize $\pi_l$, we utilize a structural template (Table~\ref{tab:template}) that enforces a strict format: \texttt{Reasoning} $\rightarrow$ \texttt{Search Call} $\rightarrow$ \texttt{Answer}. This acts as a soft constraint, ensuring adherence to the system's operational logic without imposing content-specific biases.


\begin{table}[h]
    \centering
    \caption{Template for Search-Integrated Reasoning following the implementation of Search-R1~\cite{jin2025search}.}
    \begin{tabular}{p{0.96\columnwidth}}
        \toprule
        Answer the given question. You must conduct reasoning inside \textcolor{thinkpurple}{<think>} and \textcolor{thinkpurple}{</think>} first... if you lack knowledge, call search engine via \textcolor{searchblue}{<search>} query \textcolor{searchblue}{</search>}... return results in \textcolor{infobrown}{<information>}... Final answer in \textcolor{answerpink}{<answer>}... Question: \textcolor{questionred}{question}. \\
        \bottomrule
    \end{tabular}
    \label{tab:template}
\end{table}

\subsection{The Meta-Refiner for Hierarchical Correction}
\label{sec:meta_thinker}
A core premise of our work is that suboptimal search decisions often occur in intermediate steps and silently misguide subsequent reasoning. Standard rejection sampling is inefficient for repairing such cascading errors. To address this, we introduce the Meta-Refiner, which performs \textit{targeted causal intervention} rather than blind regeneration. The Meta-Refiner shares the underlying LLM with the Actor but is steered by control prompts to perform two sub-objectives.

\textit{1) Discriminator for global coherence checking.}
The Discriminator, denoted $\pi_d(\hat{y}\mid x)\in[0,1]$, serves as a gate that enforces trajectory-level reasoning coherence. Given a reasoning trajectory $\hat{y}$, it estimates the probability that the reasoning remains globally coherent with the problem specified by $x$. We accept $\hat{y}$ when $\pi_d(\hat{y}\mid x)\ge \tau$; otherwise, we flag it for refinement. Accordingly, the acceptance probability is a Bernoulli distribution $\alpha(\hat{y}|x) = P(\pi_d(\hat{y}\mid x)\ge\tau)$.

\textit{2) Trimmer for local error localization.} To address the issue of error propagation, the Trimmer $\pi_h(k|\hat{y}, x)$ identifies the specific search step $k+1$ where the reasoning or search query first deviated (the "root cause"). The system preserves the valid prefix $\hat{y}_{1:k}$, truncates the flawed suffix, and regenerates a new suffix using the base policy $\pi_l$. This ``cut-and-regenerate'' strategy preserves valuable partial reasoning, significantly improving sample efficiency compared to discarding the entire trajectory. 

Together, the discriminator and trimmer implement an iterative accept-or-repair procedure. For each candidate trajectory, the discriminator first decides whether it is globally coherent. If it is rejected, the trimmer localizes the earliest deviation and triggers cut-and-regenerate editing to produce a revised trajectory. This collaborative process induces a smoothed mixture policy $q(y\mid x)$, formalized in Algorithm~\ref{alg:meta_thinker}. Repeating this procedure up to a budget $N_{\max}$ yields progressively improved trajectories and accumulates correction history, which strengthens the Meta-Refiner's ability to localize errors over time.


\begin{algorithm}[h]
   \caption{Meta-Refiner Execution Flow}
   \label{alg:meta_thinker}
\begin{algorithmic}[1]
   \STATE {\bfseries Input:} Context $x$, Policy $\pi_l$, Discriminator $\pi_{d}$, Trimmer $\pi_h$.
   \STATE Generate initial trajectory $\hat{y} \sim \pi_l(\cdot|x)$
   \WHILE{$n < N_{max}$}
       \IF{$\pi_d(\hat{y}|x) \ge \tau$}
           \STATE \textbf{return} $\hat{y}$ \COMMENT{Accept}
       \ENDIF
       \STATE Sample cut-point $k \sim \pi_h(\cdot|\hat{y}, x)$
       \STATE $y_{\text{prefix}} \leftarrow \hat{y}_{1:k}$
       \STATE Regenerate $y_{\text{suffix}} \sim \pi_l(\cdot|x, y_{\text{prefix}})$
       \STATE $\hat{y} \leftarrow [y_{\text{prefix}}, y_{\text{suffix}}]$
       \STATE $n \leftarrow n + 1$
   \ENDWHILE
   \STATE \textbf{return} $y = \hat{y}$
\end{algorithmic}
\vspace{-0.1cm}
\end{algorithm}





\subsection{Hybrid Reward Modeling for Multi-Scale Supervision}
\label{sec:hybrid_reward}
To tackle the credit assignment issue where local search actions are conflated with global outcomes, we introduce a hybrid reward $R(y)$ that provides supervision at both scales.

\textit{Global Outcome Reward.} We use Exact Match (EM) between the predicted answer $a_{\text{pred}}$ and ground truth $a_{\text{gold}}$: $r_{\text{outcome}}(y) = \mathbb{I}(a_{\text{pred}} = a_{\text{gold}})$. This ensures the final output satisfies the user's intent.

\textit{Local Process Reward.} To distinguish between trajectories that are correct by chance versus those supported by high-quality evidence, we quantify the utility of retrieved context. For a set of retrieved chunks $C = \{c_1, \dots, c_M\}$, an external judge evaluates the utility $u_i \in \{0, 1\}$ of each chunk. The process reward is the density of useful information: $r_{\text{process}}(y) = \frac{1}{M} \sum_{i=1}^M u_i$. Implementation specifics are outlined in Appendix~\ref{appendix:local_process_reward}.

\textbf{Overall reward.} To prevent reward hacking (maximizing retrieval without solving the task), the process reward is gated by outcome 
\begin{equation}
    R(y) = r_{\text{outcome}}(y) \cdot (1 + r_{\text{process}}(y)).
\end{equation}
This formulation explicitly reinforces the principle that high-quality search is a necessary condition for robust reasoning.

\subsection{Joint Optimizing the Actor and Meta-Refiner}
We leverage Group Relative Policy Optimization (GRPO) to optimize the shared weight $\theta$ of Actor and Meta-Refiner jointly~\citep{shao2024deepseekmath}. For each input $x$, we sample a group of $G$ trajectories $\{y_1, \dots, y_G\}$ from the mixture distribution $q(\cdot|x)$. 
Crucially, we treat each $y_i$ as an augmented execution trace comprising both the reasoning path from $\pi_l$ and refinement actions sampled from the discriminator $\pi_d(y)$ and trimmer $\pi_h(k|\hat{y})$.
The objective is to maximize:
\begin{equation}
\begin{aligned}
    &\mathcal{L}_{\text{GRPO}}(\theta) = \mathbb{E}_{x, \{y_i\}_{i=1}^G \sim q} \left[ \frac{1}{G} \sum_{i=1}^G \frac{1}{L_i} \sum_{t=1}^{L_i} L_t(y_i, \theta) \right]\\
    &L_t(y_i, \theta) = \Big[ r_t(\theta) \hat{A}_{i,t}, \text{clip}(r_t(\theta), 1-\epsilon, 1+\epsilon) \hat{A}_{i,t} \Big] - \beta \mathbb{D}_{\text{KL}}[\pi_l || \pi_{\text{ref}}],
\end{aligned}
\end{equation}

where the advantage $\hat{A}_i$ is computed via group normalization of the hybrid rewards, and $r_t(\theta)$ denotes the probability ratio $\frac{\pi_\theta(a_t|s_t)}{\pi_{\theta_{old}}(a_t|s_t)}$, which measures the deviation of the current policy from the old policy. This allows the model to learn the optimal balance between generation and correction solely from the interaction outcome, effectively solving the multi-scale credit assignment problem end-to-end.

\subsection{Mechanisms of Performance Gain}
\label{sec:mechanisms}
Unlike prior work that optimizes only the Actor, we jointly optimize both the Actor and the Meta-Refiner. To rigorously justify the necessity of optimizing the Meta-Refiner, as opposed to relying on static prompting or standard rejection sampling, we decompose the total expected sample reward improvement $\Delta J$, into three governing mechanisms. 
As formally derived in Appendix~\ref{appendix:perf_driver}, the net performance gain is not a byproduct of mere sampling volume, but strictly depends on the agent's ability to satisfy specific covariance conditions. We characterize the gain decomposition as:
\begin{equation}
    \Delta J = \underbrace{\mathcal{A}_{\text{prec}}}_{\text{Selection Precision}} + \underbrace{\mathcal{V}_{\text{inter}}}_{\text{Intervention Volume}} \times \underbrace{\mathcal{S}_{\text{trim}}}_{\text{Trimming Skill}}.
    \label{eq:gain_decomp}
\end{equation}
We next describe each term in Eq.~\ref{eq:gain_decomp} and explain how it contributes to the overall improvement:

\textit{Selection Precision $\mathcal{A}_{\text{prec}}$.} 
This term represents the system's capacity for global evaluation. Mathematically defined as $\text{Cov}_{\pi_l}(\alpha(y), R(y) - J_{\text{trim}}(y))$, it measures the alignment between the discriminator's acceptance probability and the trajectory's relative quality. 
A positive $\mathcal{A}_{\text{prec}}$ implies the discriminator successfully distinguishes which trajectories are worth preserving while exposing chains requiring correction. (e.g., those containing hallucinations or redundant steps) to the refinement process. By treating the entire interaction trace, such as reasoning, decision-to-accept, and decision-to-cut, as a single unified trajectory, GRPO naturally maximizes this covariance without requiring separate supervision signals for the Meta-Refiner.

\textit{Trimming Skill $\mathcal{S}_{\text{trim}}$.} 
This term quantifies the effectiveness of the ``cut-and-regenerate'' mechanism. Defined as $\sum_k \text{Cov}(\pi_h(k|y), G_k(y))$, it measures the correlation between the selected cut-point $k$ and the expected gain $G_k(y)$ from regenerating at that specific step.
Therefore, a positive $\mathcal{S}_{\text{trim}}$ indicates that the Trimmer precisely locates the specific low-quality search action that caused the reasoning collapse, such as a failed search query or a logic error, where the trajectory first deviated. This behavior is reinforced by propagating the outcome reward back to the specific cut-point selection $k$, encouraging the agent to target pivotal moments of failure.

\textit{Intervention Volume $\mathcal{V}_{\text{inter}}$.} 
Defined as $1 - \mathbb{E}[\alpha(y)]$, this term represents the volume of trajectories subjected to correction. It acts as a multiplier in the Eq.~\ref{eq:gain_decomp}. Even a highly skilled trimmer ($\mathcal{S}_{\text{trim}} > 0$) contributes little if the discriminator is overly conservative (accepting flawed answers, $\mathcal{V}_{\text{inter}} \to 0$). Conversely, if the discriminator flags valid answers ($\mathcal{V}_{\text{inter}} \to 1$) while the trimmer is unskilled, the computational budget is wasted. The system must find a balance between exploration and exploitation, ensuring that it neither overlooks errors nor wastes resources.

The joint optimization seeks an equilibrium where $\mathcal{V}_{\text{inter}}$ is sufficiently large to correct errors but constrained enough to preserve sample efficiency. Under joint optimization with meta-refiner, if the agent accepts a low-quality trajectory, the resulting low group-relative advantage penalizes the discriminator, directly driving $\mathcal{A}_{\text{prec}}$ upward.

\textbf{Summary of Success Conditions.} 
Unlike standard RAG or Rejection Sampling, which rely solely on the actor policy's generation probability, Search-R2 achieves a net positive gain ($\Delta J > 0$) for each rollout if and only if three conditions are met simultaneously. Formally, these correspond to $\mathcal{A}_{\text{prec}} > 0$, $\mathcal{S}_{\text{trim}} > 0$, and a calibrated $\mathcal{V}_{\text{inter}}$ that exposes sufficient samples for refinement without suppressing high-quality outputs. Furthermore, the Meta-Refiner supports iterative execution within $N_{max}$, where the posterior $q(\cdot|x)$ from iteration $t$ serves as the base policy for $t+1$. The conditions for improvement remain valid in recursive settings.

\section{Formalization}
\label{sec:formalization}
In this section, we present a theoretical framework for analyzing the mechanisms that drive the performance improvements of Search-R2. While the previous section detailed the algorithmic implementation of the Actor-Refiner collaboration, this section aims to mathematically quantify the specific contributions of the discrimination and refinement phases. We formalize the collaborative process as a smoothed mixture policy and derive a decomposition of the expected reward gain. 
A summary of the mathematical notation is provided in Appendix~\ref{tab:notation}.

\subsection{Performance Analysis}
Our primary theoretical objective is to quantify the performance advantage of the Meta-Refiner over the base actor policy. We analyze the expected performance gain, $\Delta J = J_{meta} - J_{base}$, where $J_{base} = \mathbb{E}_{y \sim \pi_l}[R(y)]$ represents the standard actor's performance, and $J_{meta} = \mathbb{E}_{y \sim q}[R(y)]$ represents the performance under the Meta-Refiner distribution $q$.
Analyzing this difference is crucial because it allows us to mathematically disentangle two sources of improvement, namely the \textit{discriminative ability} to identify poor samples and the \textit{trimming ability} to correct them.

\begin{proposition}[Performance Decomposition of Meta-Refiner]
\label{prop:perf_decomp}
Let the induced trajectory distribution $q(y \mid x)$ of the Meta-Refiner be formalized as a mixture policy:
\begin{equation}
\begin{split}
    q(y \mid x) &= \pi_l(y \mid x) \alpha(y) + \int_{\hat{y}} \pi_l(\hat{y} \mid x) (1 - \alpha(\hat{y})) T'(y \mid x, \hat{y}) \, d\hat{y},
\end{split}
\label{eq:mixture_policy}
\end{equation}
\normalsize

where $\pi_l$ is the base policy, $\alpha(y) \in [0,1]$ is the acceptance probability, and $T'(y \mid x, \hat{y})$ is the normalized transition distribution of the trimmer for cutting and regenerating a rejected sample $\hat{y}$. Note that $q$ is self-normalized (see Proof in Appendix~\ref{appendix:prop_4_1}). The expected reward $J_{meta}$ decomposes relative to the base performance $J_{base}$ as:
\begin{equation}
\begin{split}
    J_{meta} &= J_{base} + \underbrace{\operatorname{Cov}_{\pi_l}\left(a(y), R(y) - J_{trim}(y)\right)}_{\text{Selection Precision}} + \underbrace{(1 - Z_{acc})(\bar{J}_{trim} - J_{base})}_{\text{Correction Volume Gain}}.
\end{split}
\end{equation}
\normalsize
Here, $\operatorname{Cov}(X,Y) = \mathbb{E}[XY] - \mathbb{E}[X]\mathbb{E}[Y]$ denotes the covariance. $J_{trim}(\hat{y}) = \mathbb{E}_{y \sim T'(\cdot \mid \hat{y})}[R(y)]$ is the expected reward after correcting $\hat{y}$, $\bar{J}_{trim} = \mathbb{E}_{\pi_l}[J_{trim}(\hat{y})]$, and $Z_{acc} = \mathbb{E}_{\pi_l}[a(y)]$ is the global acceptance rate.
\end{proposition}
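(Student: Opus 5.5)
We first confirm that $q$ is a probability distribution, and then obtain the decomposition by integrating $R$ against the mixture in equation~\ref{eq:mixture_policy} and reorganising the terms with the covariance identity. Throughout, $a(y)$ in the statement is read as the acceptance probability $\alpha(y)$. We assume $R$ is bounded, which holds since $R\in[0,2]$, so that Fubini's theorem applies and every integral below is finite.

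\emph{Normalisation.} Integrate equation~\ref{eq:mixture_policy} over $y$. The first term gives $\int \pi_l(y\mid x)\alpha(y)\,dy = Z_{acc}$. In the second term we exchange the order of integration, which is justified because the integrand is nonnegative (Tonelli). Since $T'(\cdot\mid x,\hat y)$ is a normalised distribution, $\int T'(y\mid x,\hat y)\,dy=1$, and the second term reduces to $\int \pi_l(\hat y\mid x)(1-\alpha(\hat y))\,d\hat y = 1-Z_{acc}$. The total is therefore $1$.

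\emph{Expected reward under $q$.} Next we compute $J_{meta}=\int R(y)q(y\mid x)\,dy$. The accepted branch contributes $\mathbb{E}_{\pi_l}[\alpha(y)R(y)]$. For the rejected branch we again swap integrals and recognise the inner integral $\int R(y)T'(y\mid x,\hat y)\,dy$ as $J_{trim}(\hat y)$. This gives
\begin{equation*}
J_{meta}=\mathbb{E}_{\pi_l}[\alpha R] + \mathbb{E}_{\pi_l}[(1-\alpha)J_{trim}] = \bar J_{trim} + \mathbb{E}_{\pi_l}[\alpha (R - J_{trim})].
\end{equation*}

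\emph{Covariance split.} Now write $\mathbb{E}[\alpha D]=\operatorname{Cov}(\alpha,D)+\mathbb{E}[\alpha]\,\mathbb{E}[D]$ with $D=R-J_{trim}$, where all expectations are under $\pi_l$. Since $\mathbb{E}[\alpha]=Z_{acc}$ and $\mathbb{E}[D]=J_{base}-\bar J_{trim}$, we get
\begin{equation*}
J_{meta}=\bar J_{trim}+\operatorname{Cov}_{\pi_l}(\alpha,R-J_{trim})+Z_{acc}(J_{base}-\bar J_{trim}).
\end{equation*}
It remains to check that
\begin{equation*}
\bar J_{trim}+Z_{acc}(J_{base}-\bar J_{trim}) = J_{base}+(1-Z_{acc})(\bar J_{trim}-J_{base}).
\end{equation*}
Both sides equal $Z_{acc}J_{base}+(1-Z_{acc})\bar J_{trim}$, which completes the decomposition.

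\emph{Main obstacle.} The algebra is routine. The only delicate point is the exchange of integration order and the identification of the inner integral with $J_{trim}(\hat y)$; this requires $T'$ to be normalised for every rejected $\hat y$, which the statement assumes. If we wanted to treat the iterative version with budget $N_{\max}$, we would apply the same argument with $q$ from one round playing the role of $\pi_l$ in the next, but the proposition only requires a single round.
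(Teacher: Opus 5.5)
Your proof is correct and follows the paper's argument. It uses the same normalization check via $\int T'(y\mid x,\hat y)\,dy=1$, the same Fubini swap to identify $J_{trim}(\hat y)$ in the rejected branch, and the covariance identity. The only cosmetic difference is that you merge the branches into $\bar J_{trim}+\mathbb{E}_{\pi_l}[\alpha(R-J_{trim})]$ and apply the identity once. The paper instead applies it separately to the accepted and rejected terms and then combines $\operatorname{Cov}_{\pi_l}(\alpha,R)-\operatorname{Cov}_{\pi_l}(\alpha,J_{trim})$ by bilinearity.
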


This derivation characterizes $q$ as a smoothed mixture policy. The performance gain is driven by the discriminator's precision in identifying low-quality samples (\textit{Selection Precision}) and the trimmer's ability to improve those samples (\textit{Correction Volume Gain}).

\subsection{Decomposing the Correction Volume Gain}
We further analyze the term $\Delta J_{trim} = \bar{J}_{trim} - J_{base}$, which represents the performance improvement provided by the trimming strategy. 
We aim to decompose this gain into the \textit{baseline gain} and the \textit{attribution ability} of the trimmer.
\paragraph{Preliminaries.}
Let $\hat{y}$ be a draft sequence of length $T$ from $\pi_l$ rejected by the discriminator. We define the set of possible cut-points as $\mathcal{K} = \{1, \dots, T\}$. Let $\pi_h(k|\hat{y})$ be the trimmer policy (probability of cutting at index $k + 1$) and let $V^{\pi_l}(\hat{y}_{1:k})$ be the value of regenerating the suffix from $k$.

\begin{proposition}[Decomposition of Trimming Strategy]
\label{prop:corr_strategy}
Let $G_k(\hat{y}) = V^{\pi_l}(\hat{y}_{1:k}) - R(\hat{y})$ denote the \textbf{regeneration gain} at step $k$. The total correction gain $\Delta J_{trim}$ decomposes into a covariance term representing the agent's skill and a mean term:
\begin{equation}
\begin{split}
\Delta J_{trim} = \underbrace{\sum_{k=1}^T \operatorname{Cov}_{\hat{y}} (\pi_h(k|\hat{y}),G_k(\hat{y}))}_{\text{Trimming Skill}} + \underbrace{\bar{G}(\hat{y})}_{\text{Baseline Gain}},
\end{split}
\end{equation}
\normalsize
where $\bar{G}(\hat{y}) = \sum_{k} \mathbb{E}[\pi_h(k|\hat{y})] \mathbb{E}[G_k(\hat{y})]$ denotes the baseline gain (see proof in Appendix~\ref{appendix:prop_4_2}). 
\end{proposition}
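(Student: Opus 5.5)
The plan is to write the expected post-correction reward $J_{trim}(\hat{y})$ explicitly in terms of the trimmer policy and the regeneration values, and then apply the identity $\mathbb{E}[XY]=\operatorname{Cov}(X,Y)+\mathbb{E}[X]\mathbb{E}[Y]$ term by term. Throughout, all expectations over $\hat{y}$ are taken under $\pi_l(\cdot\mid x)$. This is the same measure used to define $J_{base}$ and $\bar{J}_{trim}=\mathbb{E}_{\pi_l}[J_{trim}(\hat{y})]$ in Proposition~\ref{prop:perf_decomp}, so that $\Delta J_{trim}=\bar{J}_{trim}-J_{base}$ is well defined. The phrase ``rejected by the discriminator'' in the preliminaries only describes when the trimmer is invoked; the rejection weighting $(1-\alpha)$ is already carried by the $(1-Z_{acc})$ factor and the selection-precision covariance in Proposition~\ref{prop:perf_decomp}.

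\emph{Step 1: unfold the transition kernel.} The normalized kernel $T'(y\mid x,\hat{y})$ is the two-stage process of Algorithm~\ref{alg:meta_thinker}: sample $k\sim\pi_h(\cdot\mid\hat{y},x)$, keep $\hat{y}_{1:k}$, and sample a suffix from $\pi_l(\cdot\mid x,\hat{y}_{1:k})$. By the tower property, and by the definition of $V^{\pi_l}(\hat{y}_{1:k})$ as the expected reward of completing the prefix under $\pi_l$,
\[
J_{trim}(\hat{y})=\sum_{k=1}^{T}\pi_h(k\mid\hat{y})\,V^{\pi_l}(\hat{y}_{1:k}).
\]

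\emph{Step 2: subtract the draft reward.} Since $\sum_k\pi_h(k\mid\hat{y})=1$, we can write $R(\hat{y})=\sum_k\pi_h(k\mid\hat{y})R(\hat{y})$. This gives the pointwise identity
\[
J_{trim}(\hat{y})-R(\hat{y})=\sum_k\pi_h(k\mid\hat{y})\,G_k(\hat{y}).
\]
Taking $\mathbb{E}_{\hat{y}\sim\pi_l}$ and using $J_{base}=\mathbb{E}_{\pi_l}[R(\hat{y})]$, we obtain $\Delta J_{trim}=\sum_k\mathbb{E}[\pi_h(k\mid\hat{y})G_k(\hat{y})]$, with linearity justified because the sum over $k$ is finite.

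\emph{Step 3: split each summand.} Writing $\mathbb{E}[\pi_hG_k]=\operatorname{Cov}(\pi_h(k\mid\hat{y}),G_k(\hat{y}))+\mathbb{E}[\pi_h(k\mid\hat{y})]\,\mathbb{E}[G_k(\hat{y})]$ and summing over $k$ yields exactly the Trimming Skill term plus the Baseline Gain $\bar{G}$.

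The main obstacle I expect is technical rather than conceptual: the draft length $T$ is random, so both the index set $\mathcal{K}$ and $G_k$ depend on $\hat{y}$. This makes $\operatorname{Cov}_{\hat{y}}(\pi_h(k\mid\cdot),G_k(\cdot))$ ill-defined for $k>T(\hat{y})$. I would handle it by fixing the maximal horizon $T_{\max}$ imposed by the step limit, and setting $\pi_h(k\mid\hat{y})=0$ and $G_k(\hat{y})=0$ for $k>T(\hat{y})$. The extended variables are bounded, because $R\in[0,2]$ and hence $V^{\pi_l}\in[0,2]$. With this convention all covariances exist, normalization of $\pi_h$ still holds, and the padded terms contribute zero to both sides. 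The decomposition then holds with $T$ read as $T_{\max}$.
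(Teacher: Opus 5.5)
Your proof is correct and follows the paper's argument: write $\Delta J_{trim}=\mathbb{E}_{\hat y}[\sum_k \pi_h(k|\hat y)V^{\pi_l}(\hat y_{1:k})]-\mathbb{E}_{\hat y}[R(\hat y)]$, move $R(\hat y)$ inside the sum using $\sum_k\pi_h(k|\hat y)=1$ (which the paper uses without stating it), and apply $\mathbb{E}[XY]=\operatorname{Cov}(X,Y)+\mathbb{E}[X]\mathbb{E}[Y]$ to each term. Your tower-property derivation of the first line and the zero-padding to a fixed horizon $T_{\max}$ add rigor the paper omits; note that with random $T$ only the total is independent of the value you pad $G_k$ with, while the individual covariance/baseline split depends on that choice.
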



This formulation isolates two drivers of performance:
\begin{itemize}[leftmargin=*]
    \item \textbf{Trimming Skill:} A positive covariance indicates that $\pi_h$ concentrates probability mass on cut-points $k + 1$ where the regeneration gain $G_k$ is highest. This measures the agent's ability to identify the "root cause" of a bad generation. A positive covariance implies that the trimmer possesses the capacity for concentrating probability mass on the critical turning points $k$ that yield the greatest regeneration gain ($G_k$) rather than performing random trimming.
    \item \textbf{Baseline Gain:} In high-dimensional reasoning tasks, arbitrarily truncating and regenerating a trajectory rarely improves the outcome (i.e., $\mathbb{E}[G_k(\hat{y})] \approx 0$ for random $k$). Consequently, $\bar{G} \approx 0$, implying that maximizing the correction gain $\Delta J_{\text{trim}}$ relies almost entirely on the trimmer's skill in selecting precise cut-points.
\end{itemize}

\section{Experiments}

\begin{table*}[t]
    \centering
    \setlength{\tabcolsep}{4pt}
    \renewcommand{\arraystretch}{1.2}
    \caption{The main results on seven datasets. $^\dagger/^\star$ represents in-domain/out-of-domain datasets. All baselines except Search-R1 are conducted on the Qwen2.5-7B model. The best and second best performances are set as \textbf{bold} and \underline{underlined}, respectively.}\label{tab:main}
     \resizebox{0.9\textwidth}{!}{
    \begin{tabular}{lcccccccc}
        \toprule
        \textbf{Methods} & \multicolumn{3}{c}{\textbf{General QA}} & \multicolumn{4}{c}{\textbf{Multi-Hop QA}} \\
        \cmidrule{2-9}
         & \textbf{NQ$^\dagger$} & \textbf{TriviaQA$^\star$} & \textbf{PopQA$^\star$} & \textbf{HotpotQA$^\dagger$} & \textbf{2WikiMultiHopQA$^\star$} & \textbf{Musique$^\star$} & \textbf{Bamboogle$^\star$} & \textbf{Average} \\
        \midrule
        Direct Inference & 13.4 & 40.8 & 14.0 & 18.3 & 25.0 & 3.1 & 12.0 & 18.1 \\
        CoT & 4.8 & 18.5 & 5.4 & 9.2 & 11.1 & 2.2 & 23.2 & 10.6 \\
        IRCoT & 22.4 & 47.8 & 30.1 & 13.3 & 14.9 & 7.2 & 22.4 & 23.9 \\
        Search-o1 & 15.1 & 44.3 & 13.1 & 18.7 & 17.6 & 5.8 & 29.6 & 20.6 \\
        RAG & 34.9 & 58.5 & 39.2 & 29.9 & 23.5 & 5.8 & 20.8 & 30.4 \\
        SFT & 31.8 & 35.4 & 12.1 & 21.7 & 25.9 & 6.6 & 11.2 & 20.7  \\
        R1-base & 29.7 & 53.9 & 20.2 & 24.2 & 27.3 & 8.3 & 29.6 & 27.6  \\
        R1-instruct & 27.0 & 53.7 & 19.9 & 23.7 & 29.2 & 7.2 & 29.3 & 27.1  \\
        Rejection Sampling & 36.0 & 59.2 & 38.0 & 33.1 & 29.6 & 12.3 & 35.5 & 34.8 \\
        \hdashline
        \rowcolor{blue!15} Search-R1(Qwen2.5-7B) & 39.5 & 56.0 & 38.8 & 32.6 & 29.7 & 12.5 & 36.0 & 35.0  \\
        \rowcolor{blue!15} Search-R1(Qwen3-8B) & 44.0 & 63.1 & 41.8 & 37.2 & 35.5 & 15.7 & 43.0 & 40.0 \\
        \rowcolor{blue!15} Search-R1(Qwen2.5-32B) & 47.6 & \underline{68.0} & \underline{47.0} & \underline{43.3} & \underline{46.2} & \underline{22.1}  & 45.0 & \underline{45.6} \\
        \hdashline
        \rowcolor{green!15} Search-R2(Qwen2.5-7B) & 39.9 & 65.9 & 41.0 & 39.0 & 35.8 & 15.1 & 46.2 & 40.4 \\
        \rowcolor{green!15} Search-R2(Qwen3-8B) & \underline{47.7} & 67.6 & 46.6 & 41.2 & 40.5 & 17.2 & \underline{51.2} & 44.6 \\
        \rowcolor{green!15} Search-R2(Qwen2.5-32B) & \textbf{50.9} & \textbf{70.9} & \textbf{50.1} & \textbf{49.9} & \textbf{51.7} & \textbf{25.4} & \textbf{56.4} & \textbf{50.8} \\
        \bottomrule
    \end{tabular}}
\end{table*}

\begin{table}[t]
    \centering
    \setlength{\tabcolsep}{4pt}
    \renewcommand{\arraystretch}{1.2}
    \caption{Ablation results for Search-R2 on general and multi-hop question answering.}\label{tab:ablation}
    \resizebox{0.6\textwidth}{!}{
    \begin{tabular}{lccc}
        \toprule
        \textbf{Method} & \textbf{General QA} & \textbf{Multi-Hop QA} & \textbf{Average} \\
        
        \midrule
        \multicolumn{4}{l}{\textbf{Qwen2.5-7B}} \\
        Search-R1 & 41.7 & 26.1 & 35.0 \\
        Search-R1 + Meta-Refiner & 45.3 & 30.4 & 38.9 \\
        Search-R1 + Meta-Refiner + Process Reward & 45.6 & 31.6 & 39.6  \\
        Search-R2 (Full Version) & \textbf{46.5} & \textbf{32.4} & \textbf{40.4} \\

        \midrule
        \multicolumn{4}{l}{\textbf{Qwen3-8B}} \\
        Search-R1 & 46.5 & 31.4 & 40.0 \\
        Search-R1 + Meta-Refiner & 49.4 & 35.3 & 43.4 \\
        Search-R1 + Meta-Refiner + Process Reward & 49.9 & 36.1 & 44.0 \\
        Search-R2 (Full Version) & \textbf{50.8} & \textbf{36.3} & \textbf{44.6} \\

        \midrule
        \multicolumn{4}{l}{\textbf{Qwen2.5-32B-Instruct}} \\
        Search-R1  & 51.5 & 37.8 & 45.6  \\
        Search-R1 + Meta-Refiner & 54.2 & 42.7 & 49.3 \\
        Search-R1 + Meta-Refiner + Process Reward & 54.3 & 43.3 & 49.5  \\
        Search-R2 (Full Version) & \textbf{55.5} & \textbf{44.5} & \textbf{50.8} \\
        \bottomrule
    \end{tabular}}
\label{tab:small_abaltion}
\end{table}

\subsection{Experiment Setup}
\textbf{Datasets:} 
We evaluate search-integrated reasoning methods on two categories of datasets. For general question answering, we use NQ~\citep{kwiatkowski2019natural}, TriviaQA~\citep{joshi2017triviaqa}, and PopQA~\citep{mallen2022not}. For multi-hop question answering, we use HotpotQA~\citep{yang2018hotpotqa}, 2WikiMultiHopQA~\citep{ho2020constructing}, Musique~\citep{trivedi2022musique}, and Bamboogle~\citep{press2023measuring}. We train on the union of the NQ and HotpotQA training splits. Evaluation is performed on the validation or test splits of all seven datasets, which allows us to measure in-domain performance on the training distributions as well as out-of-domain generalization to held-out datasets.

\textbf{Methods:} 
We compare Search-R2 against three baseline families and a strong reference model. (i) \textit{Inference without retrieval:} direct inference and Chain-of-Thought (CoT) reasoning~\citep{wei2022chain}. (ii) \textit{Inference with retrieval:} Retrieval-Augmented Generation (RAG)~\citep{lewis2020retrieval}, IRCoT~\citep{trivedi2023interleaving}, and Search-o1~\citep{li2025search}. (iii) \textit{Fine-tuning based methods:} supervised fine-tuning (SFT)~\citep{chung2024scaling}, RL-based fine-tuning without search (R1)~\citep{guo2025deepseek}, and rejection sampling with a search engine~\citep{ahn2024large}. (iv) \textit{Reference:} Search-R1~\citep{jin2025search}, the backbone of our approach. We run experiments on three model backbones spanning multiple generations and scales, namely Qwen2.5-32B, Qwen2.5-7B, and Qwen3-8B~\citep{yang2024qwen25,yang2025qwen3}.


\textbf{Retriever:} We use E5~\citep{wang2022text} as the retriever and the 2018 Wikipedia dump~\citep{karpukhin2020dense} as the knowledge source. For fairness, we directly utilize the available index file provided by ~\citep{jin2025search} and set the number of retrieved passages to 3.

\textbf{Implementation Details:} To ensure consistency with prior work~\citep{jin2025search}, we use Exact Match (EM) as the evaluation metric and train all models with GRPO~\citep{shao2024deepseekmath} for 300 steps. At each step, 512 prompts are randomly sampled, and $n$ = 5 rollouts are generated for each prompt. Our training framework is based on the verl framework~\citep{sheng2025hybridflow} and sets the max assistant turns as 4. The max revision number per rollout is set as 1 by default. We use a learning rate of 1e-6 with a warmup ratio of 0.285. We provide more details in Appendix~\ref{appendix:implementation}.


\begin{table*}[t]
    \centering
    \setlength{\tabcolsep}{4pt}
    \renewcommand{\arraystretch}{1.2}
    \caption{The hyperparameter sensitivity experiment results with increasing maximum revision times (from 1 to 4) for each initial rollout trajectory. We conduct these experiments on the Qwen2.5-32B-Instruct model. The best performance is set as \textbf{bold}. }\label{tab:scaling}
    \resizebox{0.8\textwidth}{!}{
    \begin{tabular}{ccccccccc}
        \toprule
        \textbf{Max Revision} & \textbf{NQ} & \textbf{TriviaQA} & \textbf{PopQA} & \textbf{HotpotQA} & \textbf{2WikiMultiHopQA} & \textbf{Musique} & \textbf{Bamboogle} & \textbf{Average} \\
        
        \midrule

        1 & 50.8 & 69.4 & 49.0 & 47.6 & 49.4 & 24.2 & 54.4 & 49.3 \\
        2 & 50.9 & 71.0 & 50.6 & 48.7 & 50.7 & 25.5 & 54.4 & 50.2 \\
        3 & 51.4 & \textbf{71.2} & 50.4 & \textbf{49.3} & 51.4 & 25.7 & 54.4 & 50.6  \\
        4 &  \textbf{51.6} & \textbf{71.2} & \textbf{50.8} & \textbf{49.3} & \textbf{51.6} & \textbf{26.0} & \textbf{55.6} & \textbf{50.9} \\
        
        \bottomrule
    \end{tabular}}
    \vspace{-0.3cm}
\end{table*}


\subsection{Performance Comparison}
Table~\ref{tab:main} details the performance of Search-R2 against strong baselines on seven benchmarks. We observe that Search-R2 establishes a consistent performance lead. Notably, Search-R2 built on the Qwen2.5-7B backbone achieves a 16.1\% EM gain over the Search-R1 rejection-sampling baseline, even when Search-R1 employs the stronger Qwen3-8B backbone. This confirms that the Actor-Refiner framework effectively compensates for reduced model scale by optimizing reasoning quality. When scaling the backbone from 7B to 32B, we observe a further performance gain, with average EM rising from 40.4 to 50.8. This consistent gain under model size scaling further highlights the effectiveness of our approach.


Moreover, the performance gains are more pronounced on complex reasoning tasks. For instance, Search-R2 achieves a 5.5-point improvement on 2WikiMultiHopQA and an 11.4-point improvement on Bamboogle (+25.3\% relative gain). These tasks typically require multi-step retrieval and reasoning, where early mistakes and noisy intermediate search results can cascade and derail the remaining trajectory. By using the Meta-Refiner to detect deviations and sample high-quality traces, Search-R2 mitigates such error propagation and yields larger gains across different benchmarks. Finally, to further verify that these gains stem from targeted refinement rather than additional computation, we compare Search-R2 against the Search-R1 baseline trained using a doubled rollout budget ($n=10$). As reported in Appendix~\ref{appendix:double}, Search-R2 ($n=5$, max revision $=1$) still performs better, indicating that surgical correction is substantially more sample-efficient than brute-force sampling.

\subsection{Ablation Study}
\label{sec:ablation}
To rigorously disentangle the sources of improvement in Search-R2, we perform a component-wise analysis by sequentially integrating the Meta-Refiner, Process Reward, and Joint Optimization modules into the Search-R1 baseline. For the intermediate configurations (Search-R1 + Meta-Refiner and + Process Reward), we optimize the policy solely on reasoning traces, excluding intervention refinement from the Meta-Refiner. 
As can be seen in Table~\ref{tab:small_abaltion}, each module contributes positively to overall performance. Firstly, the integration of the Meta-Refiner drives the largest performance leap (+11.1\% on Qwen2.5-7B), suggesting that the Meta-Refiner acts as a crucial scaffold for reasoning coherence. Secondly, integrating the process reward yields consistent performance gains by explicitly valuing high-information-density retrieval. It guides the Actor under sparse feedback in complex reasoning settings. Finally, the full Search-R2 setup with joint optimization achieves the highest accuracy. These results support our strategy: unlike static methods, it enables the Actor and Meta-Refiner to co-adapt, allowing the policy to precisely localize errors and internalize the cut-and-regenerate mechanism for higher sample efficiency. Limited by the space, further details can be found in Appendix (Table~\ref{tab:ablation}).

\begin{figure}[h]
    \centering
    \includegraphics[width=0.65\textwidth]{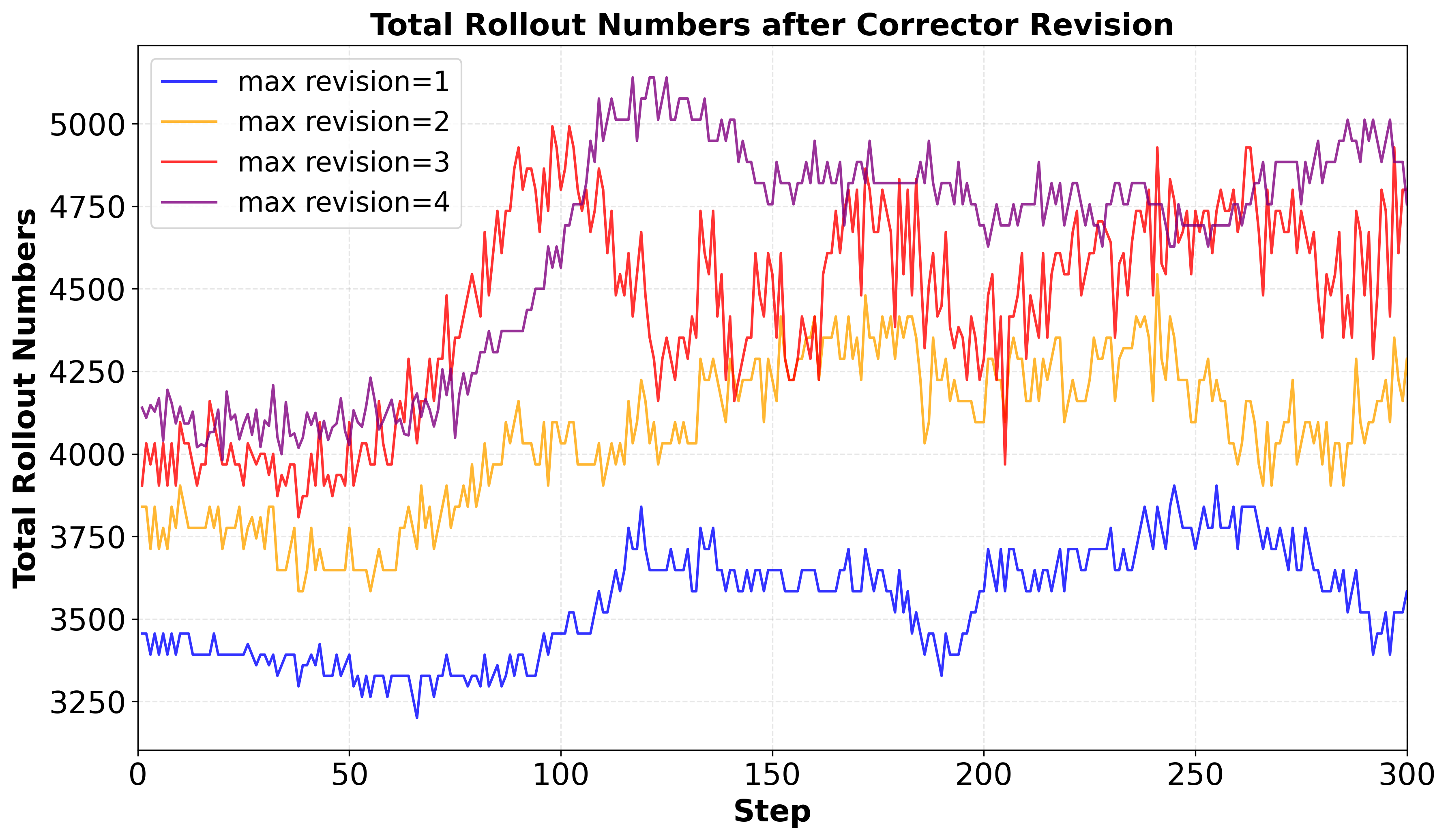}
    \caption{The total rollout numbers after revision (initial rollout numbers + refined rollout numbers) corresponding to different max revision time settings.}
   \label{fig:rollout}
\vspace{-5mm}
\end{figure}

\subsection{Sensitivity to the Maximum Revision Limit}

We evaluate sensitivity to the maximum revision limit by varying the max revision value from 1 to 4 using Qwen2.5-32B as the backbone. In these experiments, we disable process reward modeling and joint optimization to focus on the effect of allowing additional revisions. As shown in Table~\ref{tab:scaling}, increasing max revision yields consistent gains. Notably, max revision = 4 reaches an average score of 50.9, essentially matching the fully optimized Search-R2 with a single revision (50.8). This comparison highlights an efficiency trade-off that our proposed joint optimization strategy can successfully distill the benefits of a larger revision budget into a more efficient policy that achieves comparable accuracy with one correction step.

We also observe rapidly diminishing gains as the revision limit increases. The absolute EM gain drops from 0.9 points when increasing revisions from 1 to 2 to 0.3 points from 3 to 4. This pattern suggests that early revisions primarily correct errors that are relatively easy to fix, such as retrieval noise or shallow hallucinations, whereas the remaining failures are less responsive to repeated refinement. Figure~\ref{fig:rollout} corroborates this trend, showing that most trajectories trigger at most one revision. Given higher max revision limits, harder cases rarely activate further refinement. Consequently, we set max revision = 1 as the default operating point, which captures most of the benefit at low revision cost.
\vspace{-2mm}

\begin{table}[!h]
\small
\centering
\caption{Average time cost for each training step (seconds/step).}\label{tab:efficiency}
\resizebox{0.6\textwidth}{!}{
\begin{tabular}{lcccc}
\toprule
\textbf{Model} & \textbf{Search-R1} & \textbf{Search-R2} & \multicolumn{1}{c}{\textbf{\begin{tabular}[c]{@{}c@{}}Relative\\ Change \end{tabular}}} &  \multicolumn{1}{c}{\textbf{\begin{tabular}[c]{@{}c@{}}\underline{$\Delta$ EM (\%)}\\ $\Delta$ Time (\%) \end{tabular}}} \\ 
\midrule
Qwen2.5-7B & 177.8 & 193.2 & + 8.66\% & 1.78 \\
Qwen3-8B & 141.5 & 147.3 & + 4.10\% & 2.80\\ 
Qwen2.5-32B & 458.4 & 469.5 & + 2.43\% & 4.69\\ 
\bottomrule
\end{tabular}}
\end{table}


\vspace{-1mm}
\subsection{Efficiency Analysis}

We now examine whether the Search-R2 pipeline introduces substantial computational overhead in practice. Surprisingly, Table~\ref{tab:efficiency} shows that Search-R2 increases training time by only 5.06\% on average relative to the Search-R1 baseline. This modest overhead is largely due to the cut-and-regenerate mechanism, which preserves valid prefixes rather than discarding entire trajectories, thereby reducing wasted computation. Moreover, the relative overhead decreases with model scale and drops to 2.43\% for the 32B model, suggesting that the marginal refinement cost becomes less significant as distributed training overhead grows. At inference time, Search-R2 introduces no additional latency because the Meta-Refiner is decoupled at deployment.

To quantify training cost-effectiveness, we report the ratio $\Delta \text{EM}(\%)/\Delta \text{Time}(\%)$, which measures accuracy improvement per unit increase in training time. As shown in Table~\ref{tab:main}, this ratio exceeds 1 for all models, indicating that accuracy gains consistently outpace the added compute. Moreover, the ratio further improves with scale, increasing from 1.78 at 7B to 4.69 at 32B, which suggests that Search-R2 becomes more cost-effective for larger backbones.



\begin{figure}[h]
    \centering
    \includegraphics[width=0.6\textwidth]{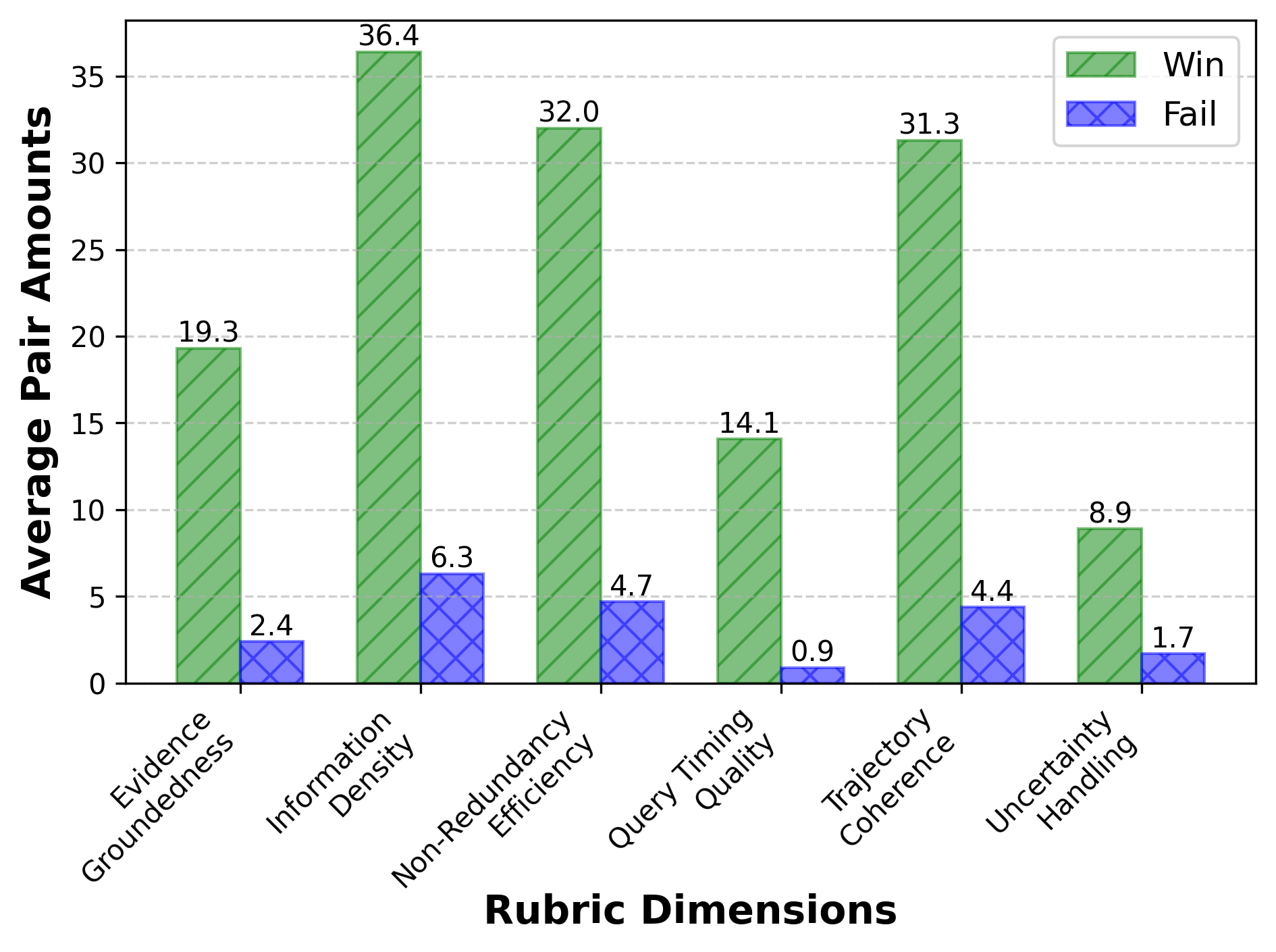}
    \vspace{-2mm}
    \caption{Average counts of Search-R2 winning and failing against Search-R1 across all seven datasets for each rubric.}
   \label{fig:traj_comparison}
\vspace{-5mm}
\end{figure}

\subsection{Trajectory Quality Comparison}
\label{sec:traj}

To better understand trajectory quality, we compare Search-R2 against Search-R1 using \texttt{GPT-5.1} as an automated judge. The evaluation covers six dimensions: \textit{evidence groundedness}, \textit{information density}, \textit{non-redundancy efficiency}, \textit{query timing quality}, \textit{trajectory coherence}, and \textit{uncertainty handling}. For each of the seven test datasets, we randomly sample 100 paired trajectories, evaluating Search-R1 and Search-R2 on the same prompt, for a total of 700 pairs. The judge assigns each trajectory an independent three-level score, with 0 indicating poor quality, 1 acceptable, and 2 strong. We then compare the paired scores and record a win when Search-R2 scores higher, a fail when it scores lower, and a tie otherwise\footnote{We omit ties in Figure~\ref{fig:traj_comparison} to improve readability.}. As shown in Figure~\ref{fig:traj_comparison}, Search-R2 outperforms Search-R1 across all dimensions, indicating more grounded, efficient, and coherent search and reasoning behavior. Detailed rubrics, full results, and evaluation prompts are provided in Appendix~\ref{appendix: trajectory}.
\vspace{-1mm}

\section{Conclusions}
In this work, we introduced Search-R2, a search-integrated reasoning framework designed to mitigate the LLM fragility when facing retrieval noise. Experiments show that while standard approaches like Search-R1 are susceptible to error propagation loops caused by misleading initial context, Search-R2’s Actor-Refiner collaboration with joint optimization effectively interrupts these failures. By employing a dynamic ``cut-and-regenerate'' mechanism, Search-R2 enables models to correct reasoning trajectories in real-time. These findings highlight the critical importance of integrating active refinement into search-integrated reasoning, offering a path toward more reliable agent behavior.

\bibliography{reference}
\bibliographystyle{iclr2025_conference}

\newpage
\appendix
\onecolumn

\newpage
\section{Notation Table}

Table \ref{tab:notation} summarizes the mathematical notations and symbols used throughout the formalization and analysis of the Search-R2 framework.

\begin{table}[!h]
    \centering
    \caption{Summary of Notations and Symbols}
    \label{tab:notation}
    \renewcommand{\arraystretch}{1.3} 
    \begin{tabularx}{\textwidth}{@{}l X@{}}
        \toprule
        \textbf{Symbol} & \textbf{Description} \\
        \midrule
        \multicolumn{2}{c}{\textit{Policies and Models}} \\
        \midrule
        $\pi_l(y|x)$ & The \textbf{Base Policy (Actor)} responsible for generating reasoning trajectories and search queries. \\
        $\pi_d(\hat{y}|x)$ & The \textbf{Discriminator (part of Meta-Refiner)} that estimates the probability of a trajectory's global coherence. \\
        $\pi_h(k|\hat{y}, x)$ & The \textbf{Trimmer (part of Meta-Refiner)} that identifies the specific step $k$ (cut-point) where an error occurred. \\
        $q(y|x)$ & The \textbf{smoothed mixture policy} induced by the interaction between the Actor and the Meta-Refiner realized by Algorithm~\ref{alg:meta_thinker}. \\
        $\Lambda$ & The external search engine used for retrieval. \\
        
        \midrule
        \multicolumn{2}{c}{\textit{Trajectory and Search}} \\
        \midrule
        $x$ & The input context or question. \\
        $\hat{y}$ & The initial reasoning trajectory generated by the Actor $\pi_l$. \\
        $y$ & The final trajectory output after potential refinement from $q(y|x)$. \\
        $a_{\text{pred}}$ & The predicted final answer extracted from the trajectory. \\
        $a_{\text{gold}}$ & The ground truth answer. \\
        $k$ & The index of a step in the trajectory (specifically used as the cut-point). \\
        $\hat{y}_{1:k}$ & The valid prefix of trajectory $\hat{y}$ up to step $k + 1$. \\
        $C$ & A set of retrieved chunks $\{c_1, \dots, c_M\}$. \\
        
        \midrule
        \multicolumn{2}{c}{\textit{Rewards and Optimization}} \\
        \midrule
        $R(y)$ & The total hybrid reward described in Section~\ref{sec:hybrid_reward}, combining outcome and process signals. \\
        $r_{\text{outcome}}(y)$ & The outcome reward (Binary Exact Match) indicating if $a_{\text{pred}} = a_{\text{gold}}$. \\
        $r_{\text{process}}(y)$ & The process reward quantifying the information density of retrieved evidence. \\
        $\mathcal{L}_{\text{GRPO}}(\theta)$ & The objective function for Group Relative Policy Optimization. \\
        $\hat{A}_t$ & The advantage estimate computed via group normalization. \\
        
        \midrule
        \multicolumn{2}{c}{\textit{Theoretical Analysis}} \\
        \midrule
        $\operatorname{Cov}(X, Y)$ & The covariance between variables $X$ and $Y$, defined as $\mathbb{E}[XY] - \mathbb{E}[X]\mathbb{E}[Y]$. \\
        $\alpha(\hat{y}|x)$ & The Discriminator $\pi_d$'s acceptance probability of a trajectory, defined as $P(\pi_d(\hat{y}|x) \ge \tau)$. \\
        $\tau$ & The predefined threshold for the Discriminator $\pi_d$ to accept a trajectory. \\
        $Z_{\text{acc}}$ & The global acceptance rate, defined as $\mathbb{E}_{\pi_l}[\alpha(y)]$. \\
        $J_{\text{base}}$ & The expected performance of the base policy: $\mathbb{E}_{y \sim \pi_l}[R(y)]$. \\
        $J_{\text{meta}}$ & The expected performance of the meta-refiner policy: $\mathbb{E}_{y \sim q}[R(y)]$. \\
        $\Delta J$ & The net performance gain: $J_{\text{meta}} - J_{\text{base}}$. \\
        $\mathcal{A}_{\text{prec}}$ & \textit{Selection Precision}: Covariance measuring the Discriminator's ability to identify low-quality samples. \\
        $\mathcal{S}_{\text{trim}}$ & \textit{Trimming Skill}: Covariance measuring the Trimmer's ability to locate the root cause of errors. \\
        $\mathcal{V}_{\text{inter}}$ & \textit{Intervention Volume}: The probability mass allocated to the trimming process ($1 - Z_{\text{acc}}$). \\
        $J_{\text{trim}}(\hat{y})$ & The expected reward after correcting a specific rejected trajectory $\hat{y}$. \\
        $G_k(\hat{y})$ & The regeneration gain at step $k$: $V^{\pi_l}(\hat{y}_{1:k}) - R(\hat{y})$. \\
        $V^{\pi_l}(\hat{y}_{1:k})$ & The value of regenerating the suffix starting from step $k$ using the base policy $\pi_l$. \\
        \bottomrule
    \end{tabularx}
\end{table}

\section{Proof for Performance Decomposition of Meta-Refiner}
\begin{proof}
\label{appendix:prop_4_1}
\textbf{1. Normalization Check.}
We first verify that $q(y|x)$ integrates to 1.
\begin{align}
    \int &q(y|x) dy = \int \pi_l(y) \alpha(y) \, dy \nonumber + \int \left[ \int \pi_l(\hat{y}) (1 - \alpha(\hat{y})) T'(y \mid \hat{y}) \, d\hat{y} \right] dy \nonumber \\
    &= \mathbb{E}_{\pi_l}[\alpha(y)] \nonumber + \int \pi_l(\hat{y}) (1 - \alpha(\hat{y})) \underbrace{\left[ \int T'(y \mid \hat{y}) \, dy \right]}_{=1} d\hat{y} \nonumber \\
    &= Z_{acc} + \mathbb{E}_{\pi_l}[1 - \alpha(\hat{y})] \nonumber \\
    &= Z_{acc} + (1 - Z_{acc}) = 1.
\end{align}

\textbf{2. Expected Reward Derivation.}
The expected reward $J_{meta}$ is the integral of $R(y)$ over the mixture components:
\begin{equation}
\begin{split}
    J_{meta} &= \underbrace{\int R(y) \pi_l(y) \alpha(y) \, dy}_{\text{Term A (Accepted)}} + \underbrace{\int R(y) \left[ \int \pi_l(\hat{y}) \bar{\alpha}(\hat{y}) T'(y \mid \hat{y}) \, d\hat{y} \right] dy}_{\text{Term B (Rejected)}},
\end{split}
\end{equation}
where $\bar{\alpha}(\hat{y}) = 1 - a(\hat{y})$.

\textit{Analyzing Term A:}
Using the covariance identity $\mathbb{E}[XY] = \mathbb{E}[X]\mathbb{E}[Y] + \operatorname{Cov}(X,Y)$:
\begin{align}
    \text{A} &= \mathbb{E}_{y \sim \pi_l}[R(y) \alpha(y)] \nonumber = J_{base} Z_{acc} + \operatorname{Cov}_{\pi_l}(\alpha, R).
\end{align}

\textit{Analyzing Term B:}
By Fubini's Theorem, we swap the order of integration:
\begin{align}
    \text{B} &= \int \pi_l(\hat{y}) (1 - \alpha(\hat{y})) \left[ \int R(y) T'(y \mid \hat{y}) \, dy \right] d\hat{y} \nonumber \\
    &= \int \pi_l(\hat{y}) (1 - \alpha(\hat{y})) J_{\text{trim}}(\hat{y}) \, d\hat{y} \nonumber \\
    &= \mathbb{E}_{\hat{y} \sim \pi_l} \left[ (1 - \alpha(\hat{y})) J_{\text{trim}}(\hat{y}) \right].
\end{align}
Let $\bar{J}_{trim} = \mathbb{E}_{\pi_l}[J_{trim}(\hat{y})]$. Applying the covariance identity again:
\begin{align}
    \text{B} &= (1 - Z_{acc}) \bar{J}_{trim} - \operatorname{Cov}_{\pi_l}(\alpha, J_{trim}).
\end{align}

\textit{Synthesis:}
Combining A and B, and grouping the covariance terms:
\begin{align}
    J_{\text{meta}} &= \left[ J_{base} Z_{acc} + (1 - Z_{acc}) \bar{J}_{trim} \right] \nonumber + \left[ \operatorname{Cov}_{\pi_l}(\alpha, R) - \operatorname{Cov}_{\pi_l}(\alpha, J_{trim}) \right] \nonumber \\
    &= \left[ J_{base} Z_{acc} + (1 - Z_{acc}) \bar{J}_{trim} \right] \nonumber + \operatorname{Cov}_{\pi_l}(\alpha, R - J_{trim}).
\end{align}
Subtracting $J_{base}$ from both sides yields the final gain:
\begin{align}
    \Delta J &= \operatorname{Cov}_{\pi_l}(\alpha, R - J_{trim}) \nonumber  + (1 - Z_{acc})(\bar{J}_{trim} - J_{base}).
\end{align}
\end{proof}

\section{Proof for Decomposition of Trimming Strategy}
\label{appendix:prop_4_2}
\begin{proof}
The total expected gain is the difference between the expected return after trimming and the baseline:
\begin{align}
    \Delta J_{\text{trim}} &= \mathbb{E}_{\hat{y}} \left[ \sum_{k=1}^T \pi_h(k|\hat{y}) V^{\pi_l}(\hat{y}_{1:k}) \right] - \mathbb{E}_{\hat{y}}[R(\hat{y})] \nonumber \\
    &= \mathbb{E}_{\hat{y}} \left[ \sum_{k=1}^T \pi_h(k|\hat{y}) \left( V^{\pi_l}(\hat{y}_{1:k}) - R(\hat{y}) \right) \right] \nonumber \\
    &= \sum_{k=1}^T \mathbb{E}_{\hat{y}} \left[ \pi_h(k|\hat{y}) G_k(\hat{y}) \right].
\end{align}
Applying the covariance identity $\mathbb{E}[XY] = \operatorname{Cov}(X,Y) + \mathbb{E}[X]\mathbb{E}[Y]$ to each term in the summation yields the proposition.
\end{proof}

\section{Drivers of Performance Gain}
\label{appendix:perf_driver}

Building upon Propositions~\ref{prop:perf_decomp} and ~\ref{prop:corr_strategy}, we decompose the total system improvement, $\Delta J$, into three governing factors. These components isolate the specific contributions of the discriminator's judgment, the Meta-Refiner's localization capability, and the overall frequency of intervention.

\begin{definition}[Selection Precision]
Let $\mathcal{A}_{\text{prec}}$ quantify the covariance between the acceptance probability $\alpha(y)$ and the sample's relative advantage (current reward minus potential correction value):
\begin{equation}
\mathcal{A}_{\text{prec}} \triangleq \operatorname{Cov}_{\pi_l} \left( \alpha(y), \, R(y) - J_{trim}(y) \right).
\end{equation}
A positive $\mathcal{A}_{\text{prec}}$ indicates that the discriminator functions as an effective filter, preferentially preserving samples where the existing reward $R(y)$ outweighs the expected value of a correction $J_{trim}(y)$.
\end{definition}

\begin{definition}[Trimming Skill]
Let $\mathcal{S}_{\text{trim}}$ quantify the alignment between the cut-point policy $\pi_h$ and the regeneration gain $G_k(\hat{y})$ across all possible cut-points $k$:
\begin{equation}
\mathcal{S}_{\text{trim}} \triangleq \sum_{k=1}^T \operatorname{Cov}_{\pi_l} \left(\pi_h(k \mid \hat{y}), \, G_k(\hat{y})\right).
\end{equation}
A positive $\mathcal{S}_{\text{trim}}$ implies the Meta-Refiner correctly identifies cut-points $k$ that yield higher regeneration gains.
\end{definition}

\begin{definition}[Intervention Volume]
Let $\mathcal{V}_{\text{inter}}$ represent the total probability mass allocated to the trimming process (the rejection rate):
\begin{equation}
\mathcal{V}_{\text{inter}} \triangleq 1 - Z_{acc} = \mathbb{E}_{\pi_l}[1 - \alpha(y)].
\end{equation}
This term dictates the magnitude of the opportunity space available for the Trimmer to act.
\end{definition}

Substituting these definitions into the total gain equation yields the following decomposition:
\begin{equation}
\Delta J = \mathcal{A}_{\text{prec}} + \mathcal{V}_{\text{inter}} \cdot (\mathcal{S}_{\text{trim}} + \underset{\approx 0}{\bar{G}}).
\end{equation}

We leverage GRPO with meta-actions to jointly optimize the Actor and the Meta-Refiner. By treating each trajectory $y_i$ as an augmented execution trace, comprising both the reasoning tokens from $\pi_l$ and the meta-actions sampled from the Discriminator $\pi_d(\hat{y})$ and Trimmer $\pi_h(k|\hat{y})$. GRPO inherently maximizes $\Delta J$. This formulation ensures that the policy gradient updates align with the maximization of $\mathcal{A}_{\text{prec}}$ and $\mathcal{S}_{\text{trim}}$.

\section{Supplementary Implementation Details}
\label{appendix:implementation}
\textbf{Hardware}
All experiments were conducted on multiple 8-node GPU clusters. Each node features dual-socket AMD EPYC 9K84 processors, providing a total of 192 physical cores and 384 threads per node, organized into two NUMA nodes. Storage infrastructure includes a 480 GB SATA SSD for the OS and environment, alongside two enterprise-grade 7.68 TB NVMe SSDs for high-throughput local data caching. Nodes are linked via a high-speed interconnect and share a distributed file system for dataset storage and checkpoint synchronization.

\textbf{Configurations}
The model is trained on a unified search-integrated reasoning dataset stored in Parquet format.
\textit{Data \& Rollout:} We set the maximum prompt and response lengths to 4096 and 3000 tokens, respectively. To prevent information loss, truncation is disabled; prompts exceeding the limit are filtered out. We utilize SGLang as the rollout engine to facilitate efficient multi-turn generation with tool calls, maintaining the raw chat format. Each prompt samples $n=5$ rollout trajectories per GRPO step, with a maximum of 4 assistant turns per trajectory. The context length during rollout is capped at 15,000 tokens to accommodate interleaved reasoning and retrieved evidence. For validation, we employ greedy decoding (sampling disabled).
\textit{Optimization:} The Actor is trained via PPO-style updates using GRPO advantages. We utilize a learning rate of 1e-6 with a warmup ratio of 0.285. The global PPO mini-batch size is 512, with a per-GPU micro-batch size of 4. To stabilize training, we apply a low-variance KL penalty (coefficient 0.001) rather than incorporating it into the reward; entropy regularization is disabled. Training utilizes Fully Sharded Data Parallel (FSDP) with full state offloading. Tensor model parallelism is set to 8 for the 32B model and 2 for the 7B/8B models.
\textit{Meta-Refiner:} The Meta-Refiner functions as an internal agent sharing weights with the Actor but utilizing distinct prompts. It is trained jointly with the Actor and remains active during rollout, performing at most one revision per trajectory. Intervention decisions are determined by comparing log-probabilities of candidate actions (revision vs. no-revision); a revision is triggered only if its log-probability exceeds that of the no-revision decision (margin $\ge$ 0.0).

\textbf{Resource Links} We provide the necessary resource links of models, retrievers, and software, to help reproduce our implementation and experiments as follows: \textit{Models:} Qwen2.5-32B-Instruct~\footnote{\url{https://huggingface.co/Qwen/Qwen2.5-32B-Instruct}}, Qwen2.5-7B~\footnote{\url{https://huggingface.co/Qwen/Qwen2.5-7B}}, Qwen3-8B~\footnote{\url{https://huggingface.co/Qwen/Qwen3-8B}}, and DeepSeek-R1-Distill-Qwen-7B~\footnote{\url{https://huggingface.co/deepseek-ai/DeepSeek-R1-Distill-Qwen-7B}};  \textit{Retriever:} E5~\footnote{\url{https://huggingface.co/intfloat/e5-base-v2}}, 2018 Wikipedia dump~\footnote{\url{https://huggingface.co/datasets/PeterJinGo/wiki-18-corpus}}, and index file~\footnote{\url{PeterJinGo/wiki-18-e5-index}}; \textit{Softwares:} verl~\footnote{\url{https://github.com/volcengine/verl/tree/main}}, FSDP~\footnote{\url{https://docs.pytorch.org/docs/stable/fsdp.html}}, and SGlang~\footnote{\url{https://github.com/sgl-project/sglang}}.



\begin{figure*}[t]
    \centering
    \includegraphics[width=\textwidth]{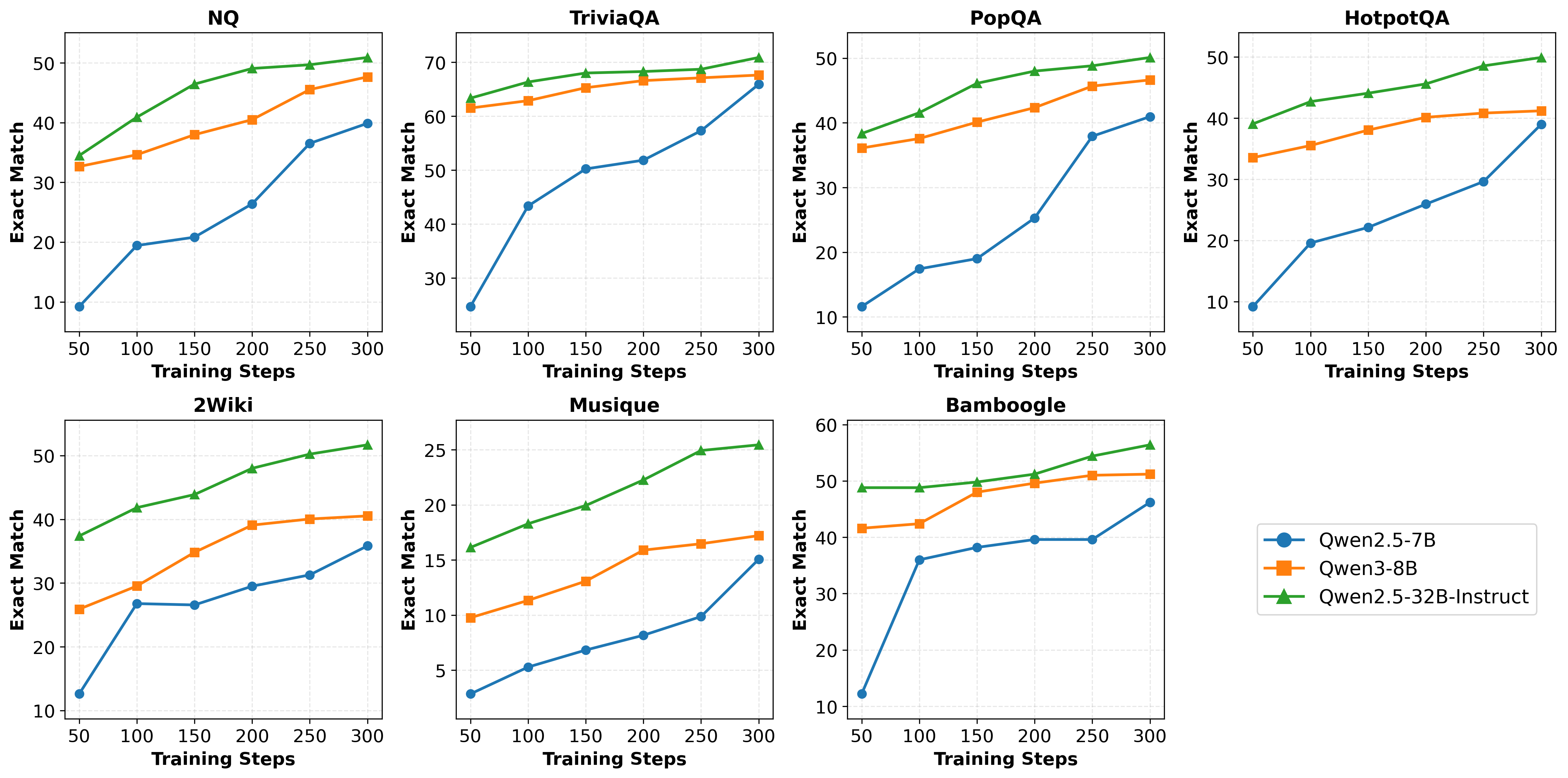}
    \caption{Detailed training dynamics of Search-R2 with different base models across all seven datasets.}
   \label{fig:dynamics_tasks}
\vspace{-3mm}
\end{figure*}

\section{Training Dynamics}
\label{appendix:dynamics}
To investigate the training dynamics of our agentic RL framework, Figure~\ref{fig:dynamics_tasks} visualizes EM scores across the seven experimental datasets, plotted from 0 to 300 steps at 50-step intervals. We observe consistent trends across all three models and datasets, with performance converging as training approaches 300 steps. Extending training beyond this point yields negligible performance gains and increases the risk of model collapse due to instabilities such as train–inference mismatch and automatic mixed-precision overflow—challenges inherent to the current RL training infrastructure. Furthermore, while performance gaps persist between models of different sizes—confirming that parameter scale remains a critical factor in tool-use and reasoning—Search-R2 enables smaller models (e.g., Qwen2.5-7B and Qwen3-8B) to approach the performance of substantially larger models like Qwen2.5-32B-Instruct on tasks such as NQ and TriviaQA. This underscores the framework's efficacy in enhancing search-integrated reasoning for compact models, facilitating their adoption in practical scenarios.

\section{Comparison against Search-R1 with Double Rollout Numbers}
\label{appendix:double}
To verify that the performance gains of Search-R2 are not merely an artifact of increased rollout volume, we trained the Search-R1 agent with doubled rollouts ($n=10$, compared to the default $n=5$). This setting serves as a proxy for a naive refinement strategy where every trajectory is regenerated from scratch, in contrast to Search-R2’s targeted refinement of intermediate turns. As shown in Table~\ref{tab:double}, Search-R2 ($n=5$, max revision = 1) consistently outperforms Search-R1 ($n=10$) throughout the training process. At the final step 300, Search-R2 achieves a score of 50.8, surpassing Search-R1 by 6.28\%. While increasing $n$ to 10 improves Search-R1, it fails to match the performance of Search-R2. This confirms that our gains stem from the Meta-Refiner’s ability to identify and correct specific flaws, rather than simple sample scaling. Furthermore, Search-R2 is significantly more efficient: while Search-R1 ($n=10$) requires generating 5,120 trajectories per step, Search-R2 generates approximately 3,300 on average, as the Meta-Refiner selects only $\sim$30\% of trajectories for revision. This reduction lowers the computational overhead from 803.2 seconds/step (Search-R1) to 469.5 seconds/step (Search-R2), demonstrating the efficiency of the Meta-Refiner module.

\begin{table*}[t]
    \centering
    \scriptsize
    \setlength{\tabcolsep}{4pt}
    \renewcommand{\arraystretch}{1.2}
    \caption{Performance comparison between Search-R2 (initial rollout number as 5 per prompt and max revision as 1) and Search-R1 with double rollout numbers (10 per prompt instead of default 5 per prompt). Here, Qwen2.5-32B-Instruct is taken as the base model.}\label{tab:double}
     \resizebox{0.8\textwidth}{!}{
    \begin{tabular}{lcccccccc}
        \toprule
        \textbf{Training Steps} & \textbf{NQ} & \textbf{TriviaQA} & \textbf{PopQA} & \textbf{HotpotQA} & \textbf{2WikiMultiHopQA} & \textbf{Musique} & \textbf{Bamboogle} & \textbf{Average} \\
        
        \midrule
        \multicolumn{8}{l}{\textbf{Search-R1 ($n$ = 10)}} \\
        50 & 33.9 & 63.3 & 38.2 & 38.7 & 36.6 & 15.7 & 46.4 & 39.0 \\
        100 & 38.3 & 65.3 & 40.3 & 41.4 & 40.2 & 17.4 & 45.6 & 41.2 \\
        150 & 43.8 & 67.6 & 45.2 & 43.7 & 43.9 & 18.9 & 49.6 & 44.7  \\
        200 &  46.6 & 68.0 & 47.3 & 44.3 & 44.5 & 21.1 & 48.8 & 45.8 \\
        250 &  49.0 & 68.3 & 49.1 & 45.2 & 46.6 & 23.2 & 50.4 & 47.4 \\
        300 &  49.7 & 68.6 & 49.1 & 45.9 & 47.8 & 24.0 & 49.6 & 47.8 \\

        \midrule
        \multicolumn{8}{l}{\textbf{Search-R2 ($n$ = 5, max revision = 1)}} \\
        50 & 34.5 & 63.3 & 38.3 & 39.1 & 37.4 & 16.1 & 48.8 & 39.7 \\
        100 & 40.9 & 66.3 & 41.6 & 42.7 & 41.8 & 18.3 & 48.8 & 42.9 \\
        150 & 46.5 & 68.0 & 46.1 & 44.1 & 43.9 & 19.9 & 49.8 & 45.5 \\
        200 &  49.1 & 68.3 & 48.0 & 45.6 & 48.0 & 22.3 & 51.2 & 47.5 \\
        250 &  49.7 & 68.7 & 48.8 & 48.6 & 50.2 & 24.9 & 54.4 & 49.3 \\
        300 &  50.9 & 70.9 & 50.1 & 49.9 & 51.7 & 25.4 & 56.4 & 50.8 \\
        
        \bottomrule
    \end{tabular}}
\end{table*}

\begin{table*}[t]
    \centering
    \scriptsize
    \setlength{\tabcolsep}{4pt}
    \renewcommand{\arraystretch}{1.2}
    \caption{The detailed ablation study results of Search-R2 with different base LLMs on seven datasets. The Meta-Refiner, process reward, and joint optimization modules are incorporated into the original Search-R1 framework in an incremental manner.}\label{tab:ablation}
    \resizebox{0.95\textwidth}{!}{
    \begin{tabular}{lcccccccc}
        \toprule
        \textbf{Method} & \textbf{NQ} & \textbf{TriviaQA} & \textbf{PopQA} & \textbf{HotpotQA} & \textbf{2WikiMultiHopQA} & \textbf{Musique} & \textbf{Bamboogle} & \textbf{Average} \\
        
        \midrule
        \multicolumn{8}{l}{\textbf{Qwen2.5-7B}} \\
        Search-R1 & 39.5 & 56.0 & 38.8 & 32.6 & 29.7 & 12.5 & 36.0 & 35.0 \\
        Search-R1 + Meta-Refiner & 39.3 & 64.4 & 40.3 & 37.0 & 34.6 & 12.7 & 44.0 & 38.9 \\
        Search-R1 + Meta-Refiner + Process Reward & 39.6 & 64.9 & 40.5 & 37.4 & 34.9 & 14.2 & 45.6 & 39.6  \\
        Search-R2 (Full Version) &  39.9 & 65.9 & 41.0 & 39.0 & 35.8 & 15.1 & 46.2 & 40.4 \\

        \midrule
        \multicolumn{8}{l}{\textbf{Qwen3-8B}} \\
        Search-R1 & 44.0 & 63.1 & 41.8 & 37.2 & 35.5 & 15.7 & 43.0 & 40.0 \\
        Search-R1 + Meta-Refiner & 46.2 & 65.9 & 45.1 & 40.2 & 40.2 & 16.2 & 49.6 & 43.4 \\
        Search-R1 + Meta-Refiner + Process Reward & 46.7 & 66.4 & 45.6 & 40.7 & 40.6 & 16.7 & 51.0 & 44.0 \\
        Search-R2 (Full Version) &  47.7 & 67.6 & 46.6 & 41.2 & 40.5 & 17.2 & 51.2 & 44.6 \\

        \midrule
        \multicolumn{8}{l}{\textbf{Qwen2.5-32B-Instruct}} \\
        Search-R1 & 47.6 & 68.0 & 47.0 & 43.3 & 46.2 & 22.1  & 45.0 & 45.6  \\
        Search-R1 + Meta-Refiner & 50.8 & 69.4 & 49.0 & 47.6 & 49.4 & 24.2 & 54.4 & 49.3 \\
        Search-R1 + Meta-Refiner + Process Reward & 50.1 & 70.0 & 49.4 & 47.6 & 49.9 & 24.3 & 55.6 & 49.5  \\
        Search-R2 (Full Version) & 50.9 & 70.9 & 50.1 & 49.9 & 51.7 & 25.4 & 56.4 & 50.8 \\

        \bottomrule
    \end{tabular}}
\end{table*}
\vspace{-3mm}

\section{Detailed Ablation Study Results}
\label{appendix: detailed ablation}
As a supplement to Section~\ref{sec:ablation}, we provide the detailed ablation study results on each dataset in Table~\ref{tab:ablation}.

\section{Supplementary Introduction to Trajectory Quality Analysis}
\label{appendix: trajectory}
\subsection{Rubric Explanation} 
We evaluate trajectory quality using six rubric dimensions that capture complementary aspects of search-integrated reasoning beyond final answer correctness.

\textit{Evidence Groundedness} measures whether key claims and intermediate conclusions in the trajectory are explicitly supported by retrieved information. A high score indicates that reasoning steps consistently reference or rely on evidence obtained through search, while a low score reflects unsupported claims or hallucinated content.

\textit{Information Density} assesses the usefulness of retrieved information relative to the total search results. Trajectories with high information density primarily retrieve content that directly contributes to solving the task, whereas low scores indicate noisy, weakly relevant, or distracting retrievals.

\textit{Non-Redundancy Efficiency} evaluates how effectively the trajectory uses its search budget. High-scoring trajectories avoid repeated or unnecessary queries and demonstrate efficient progression toward task-relevant information, while low scores reflect redundant searches or inefficient exploration.

\textit{Query Timing Quality} captures whether searches are issued at appropriate moments and whether the queries are well-formed. High scores correspond to timely searches with precise, informative queries, whereas low scores indicate poorly timed searches or vague and uninformative query formulations.

\textit{Trajectory Coherence} measures the global consistency of the reasoning process. A coherent trajectory maintains alignment between early hypotheses, retrieved evidence, and final conclusions, while incoherent trajectories exhibit logical drift, contradictions, or premature commitment to incorrect assumptions.

\textit{Uncertainty Handling} evaluates how the model responds to incomplete or ambiguous information. High-scoring trajectories appropriately acknowledge uncertainty, seek additional evidence, or hedge conclusions when warranted, whereas low scores indicate overconfident conclusions unsupported by sufficient evidence.

\begin{table*}[t]
    \centering
    \scriptsize
    \setlength{\tabcolsep}{4pt}
    \renewcommand{\arraystretch}{1.2}
    \caption{The trajectory quality comparison results among six rubric dimensions on seven datasets. $^\dagger/^\star$ represents in-domain/out-of-domain datasets. All experiments are conducted on the Qwen2.5-32B-Instruct model. In each block of \textcolor{teal}{X}/\textcolor{red!70!yellow}{Y}, \textcolor{teal}{X} indicates the pair amounts of Search-R2 outperforms Search-R1, while \textcolor{red!70!yellow}{Y} indicates the pair amounts of Search-R1 outperforms Search-R2.}\label{tab:trajectory_rubric}
     \resizebox{0.95\textwidth}{!}{
    \begin{tabular}{lcccccccc}
        \toprule
        \textbf{Methods} & \multicolumn{3}{c}{\textbf{General QA}} & \multicolumn{4}{c}{\textbf{Multi-Hop QA}} \\
        \cmidrule{2-9}
         & \textbf{NQ$^\dagger$} & \textbf{TriviaQA$^\star$} & \textbf{PopQA$^\star$} & \textbf{HotpotQA$^\dagger$} & \textbf{2WikiMultiHopQA$^\star$} & \textbf{Musique$^\star$} & \textbf{Bamboogle$^\star$} & \textbf{Average} \\
        \midrule
         Evidence Groundedness & \textcolor{teal}{24}/\textcolor{red!70!yellow}{1} & \textcolor{teal}{27}/\textcolor{red!70!yellow}{0} & \textcolor{teal}{20}/\textcolor{red!70!yellow}{1} & \textcolor{teal}{24}/\textcolor{red!70!yellow}{4} & \textcolor{teal}{8}/\textcolor{red!70!yellow}{6} & \textcolor{teal}{7}/\textcolor{red!70!yellow}{2} & \textcolor{teal}{25}/\textcolor{red!70!yellow}{3} & \textcolor{teal}{19.3}/\textcolor{red!70!yellow}{2.4} \\
         Information Density & \textcolor{teal}{37}/\textcolor{red!70!yellow}{4} & \textcolor{teal}{28}/\textcolor{red!70!yellow}{1} & \textcolor{teal}{35}/\textcolor{red!70!yellow}{4} & \textcolor{teal}{40}/\textcolor{red!70!yellow}{9} & \textcolor{teal}{37}/\textcolor{red!70!yellow}{10} & \textcolor{teal}{32}/\textcolor{red!70!yellow}{10} & \textcolor{teal}{46}/\textcolor{red!70!yellow}{6} & \textcolor{teal}{36.4}/\textcolor{red!70!yellow}{6.3} \\
         Non-Redundancy Efficiency & \textcolor{teal}{35}/\textcolor{red!70!yellow}{3} & \textcolor{teal}{28}/\textcolor{red!70!yellow}{0} & \textcolor{teal}{32}/\textcolor{red!70!yellow}{3} & \textcolor{teal}{36}/\textcolor{red!70!yellow}{7} & \textcolor{teal}{34}/\textcolor{red!70!yellow}{7} & \textcolor{teal}{20}/\textcolor{red!70!yellow}{8} & \textcolor{teal}{39}/\textcolor{red!70!yellow}{5} & \textcolor{teal}{32.0}/\textcolor{red!70!yellow}{4.7} \\
         Query Timing Quality & \textcolor{teal}{16}/\textcolor{red!70!yellow}{0} & \textcolor{teal}{17}/\textcolor{red!70!yellow}{0} & \textcolor{teal}{9}/\textcolor{red!70!yellow}{1} & \textcolor{teal}{9}/\textcolor{red!70!yellow}{2} & \textcolor{teal}{5}/\textcolor{red!70!yellow}{1} & \textcolor{teal}{30}/\textcolor{red!70!yellow}{0} & \textcolor{teal}{13}/\textcolor{red!70!yellow}{2} & \textcolor{teal}{14.1}/\textcolor{red!70!yellow}{0.9} \\
         Trajectory Coherence & \textcolor{teal}{35}/\textcolor{red!70!yellow}{3} & \textcolor{teal}{29}/\textcolor{red!70!yellow}{1} & \textcolor{teal}{32}/\textcolor{red!70!yellow}{4} & \textcolor{teal}{34}/\textcolor{red!70!yellow}{7} & \textcolor{teal}{30}/\textcolor{red!70!yellow}{6} & \textcolor{teal}{23}/\textcolor{red!70!yellow}{6} & \textcolor{teal}{36}/\textcolor{red!70!yellow}{4} & \textcolor{teal}{31.3}/\textcolor{red!70!yellow}{4.4} \\
         Uncertainty Handling & \textcolor{teal}{10}/\textcolor{red!70!yellow}{0} & \textcolor{teal}{20}/\textcolor{red!70!yellow}{1} & \textcolor{teal}{8}/\textcolor{red!70!yellow}{1} & \textcolor{teal}{10}/\textcolor{red!70!yellow}{2} & \textcolor{teal}{0}/\textcolor{red!70!yellow}{5} & \textcolor{teal}{3}/\textcolor{red!70!yellow}{2} & \textcolor{teal}{11}/\textcolor{red!70!yellow}{1} & \textcolor{teal}{8.9}/\textcolor{red!70!yellow}{1.7} \\
        \bottomrule
    \end{tabular}}
\end{table*}
\vspace{-3mm}

\subsection{Detailed Results}
Complementing the analysis in Section~\ref{sec:traj}, Table~\ref{tab:trajectory_rubric} presents the full trajectory quality comparison across seven datasets. Across the six evaluation rubrics, the frequency with which Search-R2 outperforms Search-R1 is significantly higher than the reverse on most datasets. This confirms that our Actor-Refiner collaboration mechanism effectively facilitates the generation of higher-quality search-integrated reasoning trajectories.

\subsection{Evaluation Prompt} 
\begin{table*}[!ht]
\small
    \centering
    \begin{tcolorbox}[colframe=black, colback=gray!10!white, coltitle=black, boxrule=0.5mm]
\begin{lstlisting}[basicstyle=\ttfamily\small, breaklines=true]
SYSTEM: You are a STRICT evaluator for trajectory quality comparison in search-integrated reasoning. You will compare two trajectories (A and B) for the SAME question.

Rules:
- Do NOT use outside knowledge. Judge only from what the trajectories show.
- Score EACH trajectory independently on each rubric dimension using 0/1/2:
  0=poor, 1=acceptable/mixed, 2=strong.
- If A is slightly better than B on a dimension, assign A a higher score even if both are acceptable.
- Avoid giving identical scores unless the two trajectories are truly indistinguishable on that dimension.
- After scoring, choose the overall winner (A or B). Output 'Tie' ONLY if nearly identical in outcome AND process.
- A/B are just labels.

Return a SINGLE valid JSON object only (no markdown), following the schema exactly.

USER:
QUESTION: {question}

TRAJECTORY A (JSON): {traj_a}

TRAJECTORY B (JSON): {traj_b}

Rubric dimensions: evidence_groundedness, information_density, non_redundancy_efficiency, query_timing_quality, trajectory_coherence, uncertainty_handling

Output JSON schema (MUST be valid JSON):
{
  "winner": "A" | "B" | "Tie",
  "confidence": 0-100,
  "scores": {
    "A": {
      "evidence_groundedness": 0|1|2,
      "information_density": 0|1|2,
      "non_redundancy_efficiency": 0|1|2,
      "query_timing_quality": 0|1|2,
      "trajectory_coherence": 0|1|2,
      "uncertainty_handling": 0|1|2
    },
    "B": {
      "evidence_groundedness": 0|1|2,
      "information_density": 0|1|2,
      "non_redundancy_efficiency": 0|1|2,
      "query_timing_quality": 0|1|2,
      "trajectory_coherence": 0|1|2,
      "uncertainty_handling": 0|1|2
    }
  },
  "reasons": [ "reason1", "reason2", "reason3" ]
}

Decision procedure:
1) Score A and B independently on all rubric dimensions.
2) Decide winner primarily by outcome quality, if similar, decide by process quality. 3) Use 'Tie' only if truly indistinguishable.

Notes: - reasons: at most 3 short bullet strings.
\end{lstlisting}
    \end{tcolorbox}
    \caption{Prompt for trajectory quality comparison.}
    \label{tab:trajectory_prompt}
\end{table*}

To enhance the reproducibility, we provide the prompt for trajectory quality comparison in Table~\ref{tab:trajectory_prompt}.



\section{Pseudocode for LLM Response Rollout with Multi-Turn Search}
We provide the pseudocode for the standard search-integrated reasoning (original Search-R1) in Algorithm~\ref{alg:search_r1}.

\begin{algorithm}[h]
\small
\caption{LLM Response Rollout with Multi-Turn Search}
\label{alg:search_r1}
\begin{algorithmic}[1]
\REQUIRE Input $x$, policy $\pi_\theta$, search engine $\Lambda$, budget $B$.
\ENSURE Final response $\hat{y}$.
\STATE $\hat{y} \leftarrow \varnothing$, $b \leftarrow 0$
\WHILE{$b < B$}
    \STATE Generate $\hat{y}_b$ until \texttt{</search>}, \texttt{</answer>}, or EOS.
    \STATE $\hat{y} \leftarrow \hat{y} + \hat{y}_b$
    \IF{\texttt{<search>} in $\hat{y}_b$}
        \STATE Extract query $\lambda$; Retrieve $I = \Lambda(\lambda)$
        \STATE $\hat{y} \leftarrow \hat{y} + \texttt{<information>} I \texttt{</information>}$
    \ELSIF{\texttt{<answer>} in $\hat{y}_b$}
        \STATE \textbf{return} $\hat{y}$
    \ENDIF
    \STATE $b \leftarrow b + 1$
\ENDWHILE
\STATE \textbf{return} $\hat{y}$
\end{algorithmic}
\end{algorithm}


\section{Local Process Reward Implementation Details}
\label{appendix:local_process_reward}
\begin{table*}[!ht]
    \centering
    \begin{tcolorbox}[colframe=black, colback=gray!10!white, coltitle=black, boxrule=0.5mm]
\begin{lstlisting}[basicstyle=\ttfamily\small, breaklines=true]
SYSTEM: 
You are evaluating numbered collections of retrieved documents to determine their usefulness in answering a given question, where each collection is enclosed within <collection_x> and </collection_x> tags containing documents that belong to that collection. Given the question and its correct answer, mark each collection as "useful" (yes) or "not useful" (no) based on these criteria:
(1) A collection is useful if it contains information or clues that help identify the correct answer, even partially. 
(2) A collection is not useful if it's completely irrelevant to the question.
(3) A collection is not useful if it merely duplicates information from previous collections without adding new insights, even if that information would otherwise be relevant. After evaluating all collections strictly according to these criteria and the provided information, report the total count of collections marked as useful.

USER:
Question: {question}

Answer: {answer}

{M} collections of the tool responses:

<collection_1>
Doc 1: {retrieved_info_1_top_1}
Doc 2: {retrieved_info_1_top_2}
......
Doc k: {retrieved_info_1_top_k}
</collection_1>

......

<collection_{M}>
......
Doc k: {retrieved_info_{M}_top_k}
</collection_{M}>

Provide your answer strictly in the format: 
Final Answer: number
\end{lstlisting}
    \end{tcolorbox}
    \caption{LLM Judge Prompt for Chunk Utility $u_i$. The system prompt enforces strict criteria for relevance and non-redundancy.}
    \label{tab:utility_judge}
\end{table*}

\label{app:process_reward}

Local Process Reward ($r_{\text{process}}$) quantifies the \textit{information density} of the retrieved evidence, ensuring that the model is incentivized to perform efficient, non-redundant, and relevant searches.
We compute the process reward using a density-based approach evaluated by an external LLM judge (DeepSeek-R1-Distill-Qwen-7B in our experiments). Inference is performed via the vLLM framework with greedy decoding parameters: temperature set to $0.0$, top-p at $0.95$, a repetition penalty of $1.0$, and a maximum token limit of $3,000$. The evaluation procedure proceeds as follows:

\begin{enumerate}
    \item \textit{Collection Grouping.} For a given reasoning trajectory $y$, we identify all search tool invocations. The top-$k$ documents returned by a single search query are grouped into a single \textit{collection}, denoted as $c_i$. If a trajectory contains $M$ search actions, we have a set of collections $C = \{c_1, \dots, c_M\}$.
    
    \item \textit{Judge Evaluation.} We construct a prompt provided in Table~\ref{tab:utility_judge} containing the user question, the ground truth answer, and the chronological list of collections. The judge evaluates each collection $c_i$ against three strict criteria:
    \begin{itemize}
        \item \textbf{Useful ($u_i=1$):} The collection contains information or clues that help identify the correct answer, even partially.
        \item \textbf{Not Useful ($u_i=0$):} The collection is completely irrelevant.
        \item \textbf{Redundant ($u_i=0$):} The collection merely duplicates information from previous collections ($c_{1 \dots i-1}$) without adding new insights, even if the information is relevant.
    \end{itemize}
    
    \item \textit{Density Computation.} The judge outputs the total count of useful collections. The process reward is calculated as the ratio of useful collections to total search actions:
    \begin{equation}
        r_{\text{process}}(y) = \frac{1}{M} \sum_{i=1}^{M} u_i.
    \end{equation}
    
    \item \textit{Outcome Gating.} To prevent "reward hacking" where an agent maximizes retrieval scores without solving the task, the process reward is applied only when the final answer is correct. The total reward $R(y)$ is defined as:
    \begin{equation}
        R(y) = r_{\text{outcome}}(y) \cdot (1 + r_{\text{process}}(y)),
    \end{equation}
    where $r_{\text{outcome}}(y)$ is the binary Exact Match (EM) score.
\end{enumerate}



\section{Meta-Refiner Prompt}
\label{appendix:refiner_prompt}
\begin{table*}[!ht]
    \centering
    \begin{tcolorbox}[colframe=black, colback=gray!10!white, coltitle=black, boxrule=0.5mm]
\begin{lstlisting}[basicstyle=\ttfamily\small, breaklines=true]
 You are a meticulous meta-thinker. Review the numbered ASSISTANT_STEP entries and identify the earliest flawed step. Return a single integer between 0 and {max_steps} where 0 means all steps are acceptable.
 
 ASSISTANT_CONTEXT: <assistant turns before current rollout>
 
 USER: <user message>
 
 ASSISTANT_STEP_1: <assistant turn 1>
 TOOL: <tool output triggered by step 1 (if any)>
 
 ASSISTANT_STEP_2: <assistant turn 2>
 TOOL: <tool output triggered by step 2 (if any)>
 
 ...
 
 Problematic step index (0 = no issue):
\end{lstlisting}
    \end{tcolorbox}
    \caption{Prompt for Meta-Refiner.}
    \label{tab:meta_refiner_prompt}
\end{table*}

We provide the prompt for Meta-Refiner in Table~\ref{tab:meta_refiner_prompt}.


\end{document}